\documentclass[a4paper,12pt]{article}

\usepackage[unicode=true,hypertexnames=false]{hyperref}
\usepackage{amsmath,amssymb,amsthm,amsfonts}
\usepackage{xspace}
\usepackage{xcolor}
\usepackage{cite}
\usepackage{algorithm,algpseudocode}
\usepackage{marginnote}

\usepackage{tikz}
\usetikzlibrary{external}
\tikzexternalize[prefix=pic-tikzcache/]

\usepackage{pgfplots}
\usepgfplotslibrary{fillbetween}
\pgfplotsset{compat=1.14}

\newtheorem{theorem}{Theorem}
\newtheorem{lemma}{Lemma}
\newtheorem{corollary}{Corollary}
\newtheorem{definition}{Definition}

\newcommand{\ifcompileescapeplots}[1]{#1}

\newcommand{\merk}[1]{}

\newcommand{\revisepar}[2]{#1}
\newcommand{\startnewtext}{}
\newcommand{\finishnewtext}{}

\newcommand{\CC}{{\mathcal C}}
\newcommand{\Z}{{\mathbb Z}}
\newcommand{\oea}{$(1+1)$~EA\xspace}
\newcommand{\lea}{$(1+\lambda)$~EA\xspace}
\newcommand{\eps}{\varepsilon}
\newcommand{\onemax}{\textsc{OneMax}\xspace}
\newcommand{\onell}{(1+(\lambda,\lambda))}
\newcommand{\ga}{$\onell$~GA\xspace}
\newcommand{\lambdabound}{\overline{\lambda}}
\newcommand{\lambdaboundsq}{\overline{\lambda^2}}
\DeclareMathOperator{\avg}{avg}
\DeclareMathOperator{\Var}{Var}


\pgfplotscreateplotcyclelist{my custom}{%
blue!60!black,every mark/.append style={fill=blue!60!black},mark=*\\
red!70!black,every mark/.append style={fill=red!70!black},mark=*\\
green!70!black,every mark/.append style={fill=green!70!black},mark=*\\
gray,every mark/.append style={fill=gray},mark=*\\
orange,every mark/.append style={fill=orange},mark=*\\
brown!80!black,every mark/.append style={fill=brown!80!black},mark=*\\
green,every mark/.append style={fill=green},mark=*\\
violet!80!black,every mark/.append style={fill=violet!80!black},mark=*\\
black,every mark/.append style={fill=black},mark=*\\
teal,every mark/.append style={fill=teal},mark=*\\
magenta!70!black,every mark/.append style={fill=magenta!70!black},mark=*\\
yellow!90!black,every mark/.append style={fill=yellow!90!black},mark=*\\
blue!60!black,every mark/.append style={fill=blue!60!black},mark=star\\
red!70!black,every mark/.append style={fill=red!70!black},mark=star\\
green!70!black,every mark/.append style={fill=green!70!black},mark=star\\
gray,every mark/.append style={fill=gray},mark=star\\
orange,every mark/.append style={fill=orange},mark=star\\
brown!80!black,every mark/.append style={fill=brown!80!black},mark=star\\
green,every mark/.append style={fill=green},mark=star\\
violet!80!black,every mark/.append style={fill=violet!80!black},mark=star\\
black,every mark/.append style={fill=black},mark=star\\
teal,every mark/.append style={fill=teal},mark=star\\
magenta!70!black,every mark/.append style={fill=magenta!70!black},mark=star\\
yellow!90!black,every mark/.append style={fill=yellow!90!black},mark=star\\
}


\pgfplotscreateplotcyclelist{escape-style}{blue!60!black,red!70!black,green!70!black,gray,orange,black}

\input{pic/tikz-plots.tex}
\input{pic/escape.tex}

\begin{document}

\title{Runtime Analysis\\of the $\onell$ Genetic Algorithm\\on Random Satisfiable 3-CNF Formulas\footnote{An extended abstract of this report will appear in the proceedings of the 2017 Genetic and 
Evolutionary Computation Conference (GECCO 2017).}}

\author{Maxim Buzdalov \and Benjamin Doerr}

\maketitle

\begin{abstract}
The $\onell$ genetic algorithm, first proposed at GECCO 2013, showed a surprisingly good performance on so me optimization problems. The theoretical analysis so far was restricted to the \textsc{OneMax} test function, where this GA profited from the perfect fitness-distance correlation. In this work, we conduct a rigorous runtime analysis of this GA on random 3-SAT instances in the planted solution model having at least logarithmic average degree, which are known to have a weaker fitness distance correlation.

We prove that this GA with fixed not too large population size again obtains runtimes better than $\Theta(n \log n)$, which is a lower bound for most evolutionary algorithms on pseudo-Boolean problems with unique optimum. However, the self-adjusting version of the GA risks reaching population sizes at which the intermediate selection of the GA, due to the weaker fitness-distance correlation, is not able to distinguish a profitable offspring from others. We show that this problem can be overcome by equipping the self-adjusting GA with an upper limit for the population size. Apart from sparse instances, this limit can be chosen in a way that the asymptotic performance does not worsen compared to the idealistic \textsc{OneMax} case. Overall, this work shows that the $\onell$ GA can provably have a good performance on combinatorial search and optimization problems also in the presence of a weaker fitness-distance correlation.

\end{abstract}

\section{Introduction}

The $\onell$ genetic algorithm (GA) was fairly recently introduced by Doerr, Doerr, and Ebel~\cite{learning-from-black-box}. It builds on the simple idea to first generate from a single parent individual several offspring via standard-bit mutation with higher-than-usual mutation rate, to select the best of these, and to perform a biased crossover with the parent to reduce the destructive effects of the high mutation rate. This use of crossover with the parent as repair mechanism is novel in evolutionary discrete optimization. 

The so far moderate number of results on this GA show that it has some remarkable properties. A mathematical runtime analysis on the \onemax test function class~\cite{doerr-doerr-lambda-lambda-self-adjustment,doerr-doerr-lambda-lambda-fixed-tight,learning-from-black-box-thcs,Doerr16} shows that the right parameter setting leads to an optimization time of slightly better than $O(n \sqrt{\log n})$. This is remarkable as all previous runtime analyses of evolutionary algorithms on the \onemax test function class showed that these algorithms needed at least $\Omega(n \log n)$ fitness evaluations. The result is remarkable also in that it is the first time that crossover was rigorously shown to give an asymptotic runtime improvement for a simple test function. A third noteworthy property of this GA is that a simple self-adjusting choice of the offspring population size $\lambda$ inspired by the $1/5$-th rule from continuous optimization could further improve the runtime to $\Theta(n)$. Again, this is the first time that a $1/5$-th rule type dynamic parameter choice could be proven useful in discrete evolutionary optimization. These mathematical analyses are complemented by an experimental investigation on the \onemax test function, on linear functions with random weights, and on royal road functions. Further, Goldman and Punch~\cite{goldman-punch-ppp} in the analysis of their parameter-less population pyramid algorithm also used the $\onell$ GA as comparison, and it performed well on random MAX-SAT instances.

It is clear that the intermediate selection of the best mutation offspring becomes most effective if there is a strong fitness-distance correlation. The \onemax function, by definition, has a perfect fitness-distance correlation. So the question has to be asked to what degree the positive results for \onemax remain true in optimization problems with a weaker fitness distance correlation. To study this question, we analyze the performance of the \ga on random satisfiable 3-SAT instances in the planted solution model. For these, the fitness-distance correlation can be scaled via the instance density. In works~\cite{SuttonN14,doerr-neumann-sutton-oneplusone-cnf}, the analysis of the performance of the \oea on these instances has shown that when the clause-variable ratio is $m/n = \Omega(n)$, then the following strong fitness-distance correlation holds apart from an exponentially small failure probability: For any two search points $x, y$ which are least half as good as a random search point and such that they differ in one bit, the fitness and the distance are perfectly correlated in that the one closer to the optimum has a fitness larger by $\Theta(m/n)$. From this, one could easily derive an $O(n \log n)$ optimization time of the \mbox{\oea}. However, when the clause-variable ratio is only logarithmic, this strong fitness-distance correlation is far from being satisfied. Therefore, only the much weaker condition could be shown that each pair $(x,y)$ as above shows a $\Theta(m/n)$ fitness advantage of the closer search point with probability $1-n^3$. This was enough to show that with high probability (over the joint probability space of instance and algorithm) the \oea finds the optimum in $O(n \log n)$ iterations.

\textbf{Our results:} We conduct a rigorous runtime analysis of the \ga (with mutation rate $p = \lambda/n$ and crossover bias $c = 1/\lambda$ linked to the population size $\lambda$ as recommended in~\cite{Doerr16}), on the same type of random 3-SAT instance (see Section~\ref{sec:preliminaries} for the details) as regarded in~\cite{SuttonN14,doerr-neumann-sutton-oneplusone-cnf}. We observe that the weaker fitness-distance correlation of low-density instance indeed poses a problem for the \ga when the population size (and thus the mutation rate) is high. In this case, the mutation offspring are distant enough so that the weak fitness-distance correlation prevents the \ga to detect an individual closer to the optimum than the typical offspring. For the \ga with static value of $\lambda$, our experiments and informal considerations suggest that in this case the \ga, for large enough $n$, reverts to the behavior similar to the \oea, however, the constant in $O(n \log n)$ is proportional to $1 / \lambda$, just like it happens for \onemax. Things are worse when the self-adaptive version of the \ga is used. In this case, the low probability to find an improving solution lets the value of $\lambda$ further increase as governed by the $1/5$-th rule parameter adaptation. Consequently, the probability to select a profitable individual out of the mutation offspring population further decreases and this negative effect becomes stronger. The result is that $\lambda$ quickly reaches the allowed maximum of $n$ and the performance approaches the one of the \oea with the cost of one iteration $n$ times higher than usual. 

On the positive side, we make precise that these negative effect can be overcome in most cases by an appropriate choice of $\lambda$. We show that when $\lambda$ is asymptotically smaller that the fourth root of the density, then the probability to find an improving solution is asymptotically the same as for the optimization of \onemax. Consequently, when the density is $\omega(\log^2 n)$, then we can still use $\lambda = \sqrt{\log n}$ and obtain an expected optimization time (number of fitness evaluations) of $O(n \sqrt{\log n})$. Note that in this work, we do not try to achieve the later improvement of this runtime guarantee~\cite{doerr-doerr-lambda-lambda-fixed-tight} by a factor $\Theta(\sqrt{\log\log\log n / \log\log n})$ though we are optimistic that such a guarantee can be shown with mildly more effort. For a logarithmic instance density, the smallest regarded here and in~\cite{SuttonN14,doerr-neumann-sutton-oneplusone-cnf}, for any $\eps > 0$ with a choice of $\lambda = \log^{0.25-\eps} n$ we still obtain a runtime of $O(n \log^{0.75+\eps} n)$ and beat the $O(n \log n)$ performance the \oea has on these instances.

For the self-adjusting version, where a typical run of the \ga on \onemax uses $\lambda$-values of up to $\sqrt n$, we show that adding an upper limit up to which the value of $\lambda$ can at most grow, overcomes this difficulties sketched above. The runtime increase incurred by such a limit is again manageable. If the upper limit is $\lambdabound$ and it depends on the instance density as $\lambdabound = o((m/n)^4)$, then the runtime of the self-adjusting \ga is
\begin{equation*}
    O\left(n \cdot \max\left\{1, \frac{\log n}{\lambdabound}\right\}\right). 
\end{equation*}

Hence already for densities asymptotically larger than $\log^4 n$, we obtain the linear runtime that is valid for the ideal \onemax fitness landscape for densities at least logarithmic.

\textbf{Techniques employed:} Our main result that $\lambda = o((m/n)^{1/4})$ suffices for the well-functioning of the \ga on our random 3-SAT instances of density $m/n$ is based on a different fitness-distance correlation result than those used in~\cite{SuttonN14,doerr-neumann-sutton-oneplusone-cnf}. Whereas the latter require that with good probability all neighbors of a given solution (not excessively far from the optimum) have a fitness advantage or disadvantage of order $m/n$ (depending on whether they are closer to the optimum or further away), we require this condition only for a certain fraction of the neighbors. This relaxation will not be a problem since it only reduces the probability that the \ga finds a certain improvement by a constant factor (being an elitist algorithm, we do not need to care about fitness losses). On the positive side, this relaxation (i) allows us to extend the fitness-distance correlation requirement to all vertices in the $\lambda$-neighborhood instead of only all direct neighbors and (ii) gives us that this correlation property with probability $1-\exp(-\Omega(n))$ holds for all vertices (not excessively far from the optimum). Consequently, the performance results we show hold for all but an exponentially small fraction of the input instances. This fitness-distance correlation result also implies that the result of~\cite{doerr-neumann-sutton-oneplusone-cnf} for logarithmic densities holds in the same strong version as the one for linear densities, namely that on all but an exponentially small fraction of the input instances the expected runtime is $O(n \log n)$. 

To prove our fitness-distance correlation result, we use McDiarmid's bounded differences version~\cite{mcdiarmid-inequality} of the Azuma martigale concentration inequality in a novel way. To reduce the maximum influence of the independent basic random variables on the discrete quantity of interest, we replace this quantity by a larger continuous function in a way that the influence of each basic random variable is significantly reduced. We are not aware of this type of argument being used before, either in evolutionary computation or randomized algorithms in general.

\section{Preliminaries}\label{sec:preliminaries}

In this section, we define the notation, algorithms, and problems we regard in this work. Our notation is standard. We write $[a..b]$ to denote the set of all integers in the real interval $[a;b]$. We write $[r]$ to denote the integer closest to the real number $r$, rounding up in case of ties.	

\subsection{The $k$-CNF SAT Problem}

Consider a set $V$ of $n$ Boolean variables $V = \{x_1, x_2, \ldots, x_n\}$. A \emph{clause} $C$ over $V$ is the logical disjunction of exactly $k$ literals, $C = l_{1} \lor l_{2} \ldots \lor l_{k}$, and each literal $l_{i}$ is either a variable $x_j$ or its negation $\lnot x_j$. A \emph{$k$-CNF formula} $F$ is a logical conjunction of exactly $m$ clauses $F = C_1 \land C_2 \land \ldots \land C_m$. A $k$-CNF formula $F$ is \emph{satisfiable} if and only if there is an assignment of truth values to the variables such that every clause contains at least one true literal, i.e., the whole expression $F$ evaluates to true. We shall only regard the case $k=3$, but \revisepar{we have no doubts}{Reviewer 1 asks to check this statement, especially for $k = \omega(1)$.} that the main claims are true for the general case as well.

We consider random 3-CNF formulas consisting of $m$ clauses of length $k = 3$ over the $n$ variables in $V$. We take the usual assumption that each clause consists of distinct variables. This assumption is very natural since any clause of length 3 that contains repeating variables can be immediately reduced to an equivalent clause of length 2 or, alternatively, to a tautology. However, we explicitly allow repeated clauses in $F$.

Let $\Omega_{n, m}$ be the finite set of all 3-CNF formulas over $n$ variables and $m$ clauses. We work in the so-called \emph{planted solution} model. Hence there is a target assignment $x^*$ and $F$ is a formula chosen uniformly at random among all formulas in $\Omega_{n,m}$ which are satisfied by $x^*$. We refer to~\cite{SuttonN14,doerr-neumann-sutton-oneplusone-cnf} for a justification of this model and a discussion how it relates to other random satisfiability problems.

We shall, without loss of generality, assume that the planted solution is $x^* = (1,\dots,1)$. This is justified by the fact that we only regard unbiased algorithms, that is, algorithms that treat bit positions and the bit values $0$ and $1$ symmetrically. Hence these algorithms cannot profit from ``knowing'' the optimal solution. A  random formula in this model can be constructed by $m$ times (with replacement) choosing a random clause satisfied by $x^*$, that is, a random clause among all clauses containing at least one positive literal. Note that such a random formula may have other satisfying assignment than $x^*$. Nevertheless, we denote the structural distance, which is the Hamming distance here, of a solution $x$ to the planted solution $x^*$ by $d(x) =  |\{i : x_i = 0\}|$. When talking about fitness-distance correlation and related concepts, we shall always refer to this distance.

\subsection{3-CNF and Evolutionary Algorithms}

An assignment of true/false values to a set of $n$ Boolean variables can be represented by a bit string $x \in \{0,1\}^n$ such that $x_i= 1$ if and only if the $i$-th variable is having the value true. For a length-$m$ formula $F$ on $n$ variables, we define the fitness function $f = f_F : \{0,1\}^n \to [0..m]$ via $f(x) =  |\{C \in F \mid C \text{ is satisfied by } x\}|$, the number of clauses satisfied by the assignment represented by $x$. If $F$ is satisfiable, the task of finding a satisfying assignment reduces to the task of maximizing $f$.

In~\cite{doerr-neumann-sutton-oneplusone-cnf}, it is proven that when $m/n > c \ln n$ for sufficiently large constant $c$, the runtime of the $(1+1)$ EA is $O(n \log n)$ with probability polynomially close to one. One of the key concepts of the proof is the \emph{fitness-distance correlation}. In the case of logarithmic density, this concept can be formulated as follows:
\begin{lemma}[Lemma 4 from \cite{doerr-neumann-sutton-oneplusone-cnf}]\label{lemma-one-bit-flip-is-ok}
Let $0 < \eps < \tfrac 12$. Assume that $m/n > c \ln n$ for sufficiently large constant $c$. Then there exist two constants $c_1$ and $c_2$
such that, for any two solutions $x_1$ and $x_2$ such that
\begin{itemize}
    \item they are different in exactly one bit;
    \item this bit is set to $1$ in $x_2$;
    \item the structural distance $d(x_1)$ from $x_1$ to the planted solution is at most $(1/2 + \varepsilon) n$;
\end{itemize}
we have $c_1 m/n \le f(x_2) - f(x_1) \le c_2 m/n$ with probability at least $1 - n^{-3}$.
\end{lemma}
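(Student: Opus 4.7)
\smallskip

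\noindent\textbf{Proof plan.} Without loss of generality let $x^* = (1,\ldots,1)$ and let $i$ denote the bit on which $x_1$ and $x_2$ disagree, so $x_1[i]=0$ and $x_2[i]=1$. The plan is to write $f(x_2)-f(x_1)$ as a sum of $m$ i.i.d. random variables (one per clause) supported on $\{-1,0,+1\}$, compute their common distribution, and apply a Chernoff-type concentration inequality. For a single random clause $C$ satisfied by $x^*$ we have $\mathbf{1}_{C \text{ sat by } x_2} - \mathbf{1}_{C \text{ sat by } x_1} \in \{-1,0,+1\}$, and this is nonzero only if $C$ involves variable~$i$. A detailed case analysis shows:
\begin{itemize}
\item the contribution is $+1$ iff $C$ contains the positive literal $x_i$ and the other two literals are falsified by $x_1$;
\item the contribution is $-1$ iff $C$ contains the negative literal $\lnot x_i$, the other two literals are both falsified by $x_1$, \emph{and} at least one of them is a positive literal (otherwise $C$ would not be satisfied by $x^*$, a contradiction with the planted model).
\end{itemize}

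First I would count these two types of clauses. Writing $d := d(x_1)$ and recalling that $x_1[i]=0$, there are $d-1$ further ``$0$-positions'' and $n-d$ ``$1$-positions''. For a fixed unordered pair $\{j,k\}\subseteq[n]\setminus\{i\}$, exactly one sign assignment makes the two literals on $j,k$ both false under $x_1$ (positive sign on $0$-positions, negative sign on $1$-positions). Hence, out of the $7\binom{n}{3}$ clauses satisfying $x^*$, the number contributing $+1$ is $\binom{n-1}{2}$, and the number contributing $-1$ is $\binom{n-1}{2}-\binom{n-d}{2}$ (we must subtract pairs where both signs are negative, which correspond to two $1$-positions). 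Dividing by $7\binom{n}{3}$ and simplifying, the per-clause expectation equals
\[
p_+ - p_- \;=\; \frac{\binom{n-d}{2}}{7\binom{n}{3}} \;=\; \frac{3(n-d)(n-d-1)}{7n(n-1)(n-2)}.
\]
The hypothesis $d\leq(1/2+\varepsilon)n$ gives $n-d\geq(1/2-\varepsilon)n$, so $p_+-p_-=\Theta(1/n)$ with a constant depending only on $\varepsilon$, and thus $\mu := \mathbb{E}[f(x_2)-f(x_1)] = \Theta(m/n)$. An upper bound of the same order comes trivially from $p_+\leq 3/(7n)$, yielding $\mu \le 3m/(7n)$.

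Next I would apply concentration. Since the clauses are drawn independently in the planted model, $f(x_2)-f(x_1)$ is a sum of $m$ i.i.d. bounded random variables with $|Y_j|\leq 1$ and variance $\le p_+ + p_- = O(1/n)$, so the total variance is $O(m/n)$. By Bernstein's inequality applied to the deviation $t = \mu/2 = \Theta(m/n)$,
\[
\Pr\!\left[\,|f(x_2)-f(x_1)-\mu|\geq \tfrac{\mu}{2}\right] \;\leq\; 2\exp\!\left(-\Omega(m/n)\right) \;=\; 2\exp\!\left(-\Omega(\ln n)\right),
\]
where the last step uses $m/n > c\ln n$. Choosing the constant $c$ large enough makes this bound at most $n^{-3}$, and on the complementary event we have $\mu/2 \leq f(x_2)-f(x_1) \leq 3\mu/2$, which yields the desired inequality with $c_1 = \tfrac12(1/2-\varepsilon)^2\cdot\tfrac{3}{7}$ and $c_2 = \tfrac{9}{14}$ (up to lower-order corrections in $n$).

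\smallskip

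\noindent\textbf{Where the work lies.} The main technical care is in the case analysis of what makes a clause a $\pm1$-contributor under the planted model: one must argue both that the ``$+1$'' clauses are automatically satisfied by $x^*$ (so they are not lost by conditioning) and that the ``$-1$'' clauses require one of the two non-$i$ literals to be positive, which is exactly what makes $p_+-p_-$ depend on $d$ through $\binom{n-d}{2}$ rather than cancel. The role of the bound $d\leq(1/2+\varepsilon)n$ is solely to guarantee $(n-d)^2 = \Omega(n^2)$, keeping the per-clause drift bounded away from zero. Everything beyond this point is a routine Chernoff/Bernstein concentration argument, and the logarithmic density $m/n=\Omega(\log n)$ is precisely what makes $\mu = \Omega(\log n)$, so that the failure probability is polynomially small.
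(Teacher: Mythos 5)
Your proof is correct. Note that the paper itself does not prove this statement---it is imported verbatim as Lemma~4 of the cited Doerr--Neumann--Sutton paper---so there is no in-paper proof to compare against; but your argument is exactly the route the surrounding machinery suggests: your per-clause count $p_+-p_-=\binom{n-d}{2}/\bigl(7\binom{n}{3}\bigr)$ agrees with $P(n,d)-P(n,d+1)$ from Lemma~\ref{lemma:exact-average-probability} via Pascal's rule, and your concentration step is the same Bernstein-type bound the paper records in~\eqref{eq:chernoffvar}. The one point worth making explicit in a final write-up is that the constant $c_1=\tfrac{3}{14}(1/2-\varepsilon)^2$ depends on $\varepsilon$, which is consistent with how the lemma is quantified.
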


\subsection{The {$\onell$} Genetic Algorithm}

The $\onell$ Genetic Algorithm, or the $\onell$~GA for short, was proposed by Doerr, Doerr and Ebel in~\cite{learning-from-black-box-thcs}. Its main working principles are (i) to use mutation with a higher-than-usual mutation rate to speed up exploration and (ii) crossover with the parent to diminish the destructive effects of this mutation. Two versions of the algorithm were proposed, one with static parameters and one with a self-adjusting parameter choice. 

\begin{algorithm}[t]
\caption{: The $\onell$~GA with fixed integer population size $\lambda$}\label{algo:ll-fixed}
\begin{algorithmic}[1]
\State{$x \gets \Call{UniformRandom}{\{0, 1\}^n}$} \Comment{Initialization}
\For{$t \gets 1, 2, 3, \ldots$} \Comment{Optimization}
    \State{$p \gets \lambda / n$}\Comment{Mutation probability}
    \State{$c \gets 1 / \lambda$}\Comment{Crossover probability}
    \State{$\ell \sim \mathcal{B}(n, p)$}\Comment{Mutation strength}
    \For{$i \in [1..\lambda]$} \Comment{Phase 1: Mutation}
        \State{$x^{(i)} \gets \Call{Mutate}{x, \ell}$}
    \EndFor
    \State{$x' \gets \Call{UniformRandom}{\{x^{(j)} \mid f(x^{(j)}) = \max\{f(x^{(i)})\}\}}$}
    \For{$i \in [1..\lambda]$} \Comment{Phase 2: Crossover}
        \State{$y^{(i)} \gets \Call{Crossover}{x, x', c}$}
    \EndFor
    \State{$y \gets \Call{UniformRandom}{\{y^{(j)} \mid f(y^{(j)}) = \max\{f(y^{(i)})\}\}}$}
    \If{$f(y) \ge f(x)$} \Comment{Selection}
        \State{$x \gets y$}
    \EndIf
\EndFor
\end{algorithmic}
\end{algorithm}

The fixed-parameter version is outlined in Algorithm~\ref{algo:ll-fixed}. It uses the following two variation operators.
\begin{itemize}
    \item $\ell$-bit mutation: The unary mutation operator $\Call{Mutate}{x, \ell}$ creates from $x \in \{0,1\}^n$ a new bit string $y$ by flipping exactly $\ell$ bits chosen randomly without replacement.
    \item biased uniform crossover: The binary crossover operator $\Call{Crossover}{x, x', c}$ with \emph{crossover bias} $c \in [0,1]$ constructs a new bit string $y$ from two given bit strings $x$ and $x'$ by choosing for each $i \in [1..n]$ the second argument's value ($y_i = x'_i$) with probability $c$ and setting $y_i = x_i$ otherwise.
\end{itemize}

The \ga has three parameters, the mutation rate $p$, the crossover bias $c$, and the offspring population size $\lambda$. After randomly initializing the one-element parent population $\{x\}$, in each iteration the following steps are performed:
\begin{itemize}
    \item In the \emph{mutation phase}, $\lambda$ offspring are sampled from the parent $x$ by applying $\lambda$ times independently the mutation operator $\Call{Mutate}{x, \ell}$, where the step size $\ell$ is chosen at random from the binomial distribution $\mathcal{B}(n, p)$. Consequently, each offspring has the distribution of standard bit mutation with mutation rate $p$, but all offspring have the same Hamming distance from the parent.
    \item In an \emph{intermediate selection} step, the mutation offspring with maximal fitness, called \emph{mutation winner} and denoted by $x'$, is determined (breaking ties randomly).
    \item In the \emph{crossover phase}, $\lambda$ offspring are created from $x$ and $x'$ using via the biased uniform crossover $\Call{Crossover}{x, x', c}$.
    \item \emph{Elitist selection}. The best of the crossover offspring (breaking ties randomly and ignoring individuals equal to $x$) replaces $x$ if its fitness is at least as large as the fitness of $x$. 
\end{itemize}

Throughout this paper we use the mutation rate $p = \lambda / n$ and the crossover bias $c = 1 / \lambda$ as recommended and justified  in~\cite[Sections 2 and 3]{learning-from-black-box-thcs} and~\cite[Section~6]{Doerr16}.

For this \ga, a first runtime analysis~\cite{learning-from-black-box-thcs} was conducted on the \onemax test function \[\onemax : \{0,1\}^n \to \Z; x \mapsto \sum_{i=1}^n x_i.\] It was shown that for arbitrary $\lambda$, possibly being a function of $n$, the expected optimization time (number of fitness evaluations until the optimum is evaluated for the first time) is $O(\max\{n\lambda, \frac{n \log n}{\lambda}\})$. This expression is minimized for $\lambda = \Theta(\sqrt{\log n})$, giving an upper bound of $O(n \sqrt{\log n})$. The analysis, on which we will build on in this work, uses the fitness level method~\cite{Wegener02EvOpt}. Roughly speaking, the arguments are that in an iteration starting with an individual $x$ with fitness distances $d=d(x)=n-\onemax(x)$ (i) with probability $\Omega(\min\{1,d\lambda^2/n\})$ the mutation winner is less than $\ell$ fitness levels worse than the parent, and that (ii) in this case with constant probability the crossover winner is better than the parent. Consequently, the expected number of iterations needed to gain an improvement from $x$ is $O(\max\{1,n/d\lambda^2\})$. Summing over all $d$ and noting that one iteration uses $2\lambda$ fitness evaluations gives the claim.

This result is interesting in that this is the first time that crossover was proven to bring an asymptotic speed-up for a simple fitness landscape like \onemax. Previously, a constant improvement was shown to be possible for \onemax using crossover~\cite{sudholt-crossover-speeds-up}. Note that all mutation-based algorithms that treat bit positions and bit values symmetrically need at least $\Omega(n \log n)$ fitness evaluations, as this is the unary unbiased black-box complexity of \onemax~\cite{LehreW12}. 

The \ga, which in principle nothing more than a \oea with a complicated mutation operator, in experiments showed a performance superior to the one of the classic \oea on \onemax (showing also that the constants hidden in the asymptotic notation are small), on linear functions and on royal road functions~\cite{learning-from-black-box-thcs} as well as on maximum satisfiability instances~\cite{goldman-punch-ppp}.

Subsequently, the runtime analysis on \onemax was improved~\cite{doerr-doerr-lambda-lambda-fixed-tight} and the tight bound of $\Theta(\max\{n \log n / \lambda, n \lambda \log \log \lambda / \log \lambda\})$ was shown for all values of $\lambda \le n$. This is minimized to $\Theta(n \sqrt{\log n \log \log \log n / \log \log n})$ when setting $\lambda = \Theta(\sqrt{\log n \log \log n / \log \log \log n})$.

Already in~\cite{learning-from-black-box-thcs}, it was observed that a dynamic choice of the parameter $\lambda$ can reduce the runtime to linear. For this, a fitness dependent choice of $\lambda = \lceil \frac{n}{n-\onemax(x)} \rceil$ suffices. This seems to be the first time that a super-constant speed-up was provably obtained by a dynamic parameter choice (see~\cite{BadkobehLS15} for a result showing 
that also the \lea can profit from a dynamic choice offspring population size when optimizing \onemax). Since a fitness-dependent choice as above is unlikely to be guessed by an algorithm user, also a self-adjusting variant setting $\lambda$ success-based according to a \mbox{$1/5$-th} rule was proposed in~\cite{learning-from-black-box-thcs}. That this indeed closely tracks the optimal value of 
$\lambda$ and gives a runtime of $O(n)$ was later shown in~\cite{doerr-doerr-lambda-lambda-self-adjustment}. The self-adjusting version of the $\onell$~GA is described in Algorithm~\ref{algo:ll-adjusting}.

\begin{algorithm}[t]
\caption{: The $\onell$~GA with self-adjusting $\lambda \le \lambdabound$}\label{algo:ll-adjusting}
\begin{algorithmic}[1]
\State{$F \gets \text{constant} \in (1; 2)$} \Comment{Update strength}
\State{$U \gets 5$} \Comment{The 5 from the ``1/5-th rule''}
\State{$x \gets \Call{UniformRandom}{\{0, 1\}^n}$}
\For{$t \gets 1, 2, 3, \ldots$}
    \State{$p \gets \lambda / n$, $c \gets 1 / \lambda$, $\lambda' \gets [\lambda]$, $\ell \sim \mathcal{B}(n, p)$}
    \For{$i \in [1..\lambda']$} \Comment{Phase 1: Mutation}
        \State{$x^{(i)} \gets \Call{Mutate}{x, \ell}$}
    \EndFor
    \State{$x' \gets \Call{UniformRandom}{\{x^{(j)} \mid f(x^{(j)}) = \max\{f(x^{(i)})\}\}}$}
    \For{$i \in [1..\lambda']$} \Comment{Phase 2: Crossover}
        \State{$y^{(i)} \gets \Call{Crossover}{x, x', c}$}
    \EndFor
    \State{$y \gets \Call{UniformRandom}{\{y^{(j)} \mid f(y^{(j)}) = \max\{f(y^{(i)})\}\}}$}
    \If{$f(y) > f(x)$} \Comment{Selection and Adaptation}
        \State{$x \gets y$, $\lambda \gets \max\{\lambda / F, 1\}$}
    \ElsIf{$f(y) = f(x)$}
        \State{$x \gets y$, $\lambda \gets \min\{\lambda F^{1/(U - 1)}, \lambdabound\}$}
    \Else
        \State{$\lambda \gets \min\{\lambda F^{1/(U - 1)}, \lambdabound\}$}
    \EndIf
\EndFor
\end{algorithmic}
\end{algorithm}

The main idea of the self-adjusting \ga is as follows. If an iteration leads to an increase of the fitness, indicating that progress is easy, then the value of $\lambda$ is reduced by a constant factor $F > 1$. If an iteration did not produce a fitness improvement, then $\lambda$ is increased by a factor of $F^{1/4}$. Consequently, after a series of iterations with an average success rate of $1/5$, the algorithm ends up with the initial value of $\lambda$. Needless to say, $\lambda$ can never drop below $1$ and never rise above $n$ (when using the recommended mutation rate $\lambda/n$). Since we will later regard a self-adaptive version with different upper limit, we formulated Algorithm~\ref{algo:ll-adjusting} already with the additional parameter $\lambdabound$ as upper limit. Hence the self-adaptive \ga as proposed in~\cite{learning-from-black-box-thcs} uses $\lambdabound=n$. We also note that whenever $\lambda$ should be interpreted as an integer (e.g. when the population size of the current iteration needs to be determined), the value $\lambda' = [\lambda]$ rounded to the closest integer is taken instead.

\subsection{Chernoff Bounds and McDiarmid's Inequalities}

In this section we describe the tools from the probability theory which we use in this paper.

Suppose $X_1, \ldots, X_n$ are independent random variables taking values in $\{0, 1\}$. Let $X$ denote their sum, and let $\mu = E[X]$ denote the sum's expected value. Then, for any $\delta > 0$, the following bounds are known as (simplified multiplicative) \emph{Chernoff bounds}.
\begin{align*}
&\Pr[X \ge (1 + \delta) \mu] \le e^{-\frac{\delta^2 \mu}{3}}, &&0 < \delta \le 1; \nonumber\\
&\Pr[X \ge (1 + \delta) \mu] \le e^{-\frac{\delta \mu}{3}}, &&1 \le \delta; \nonumber\\
&\Pr[X \le (1 - \delta) \mu] \le e^{-\frac{\delta^2 \mu}{2}}, &&0 < \delta < 1. \label{chernoff-le}
\end{align*}

Suppose now that the $X_i$ may take values in $[-1,1]$, however, such that for some $\kappa > 0$ we have $\Pr[X_i = 0] \ge 1-\kappa$ for all $i \in [1..n]$. For this situation, we prove the following corollary from the Bernstein's inequality, which seems to be rarely used in the theory of evolutionary computation. 
\begin{equation}
  \Pr[X \le \mu - \delta] \le \max\{\exp(-\delta^2 / 4 \kappa n), \exp(-\tfrac 38 \delta)\} \label{eq:chernoffvar}
\end{equation}

\begin{proof}
  By the assumption that $\Pr[X_i = 0] \ge 1-\kappa$, we have $\Var[X_i] \le \kappa$ and thus $\Var[X] \le \kappa n$. Hence Bernstein's~\cite{Bernstein1924} inequality (see also~\cite[Theorem 1.12ff]{augerdoerr}) yields 
\begin{align*}
\Pr[X \le E[X] - \delta] &\le \exp(-\delta^2 / (2 \Var[X] + \tfrac 43 \delta)) \\ 
&\le \exp(-\delta^2 / 2 \max\{2 \Var[X], \tfrac 43 \delta\}) \\
&= \max\{\exp(-\delta^2 / 4 \kappa n), \exp(-\tfrac 38 \delta)\}.\qedhere
\end{align*}
\end{proof}

Suppose $X_1, \ldots, X_n$ are arbitrary independent random variables, and assume the function $f$ defined over the product of the domains of the $X_i$ satisfies 
\begin{equation*}
\sup_{x_1, x_2, \ldots, x_n, \hat{x}_i} |f(x_1, \ldots, x_n) - f(x_1, \ldots, \hat{x}_i, \ldots, x_n)| \le c_i
\end{equation*}
for all $1 \le i \le n$. Then, for all $\delta > 0$, the following \emph{McDiarmid's inequalities}~\cite{mcdiarmid-inequality}
hold.
\begin{align}
&\Pr[f(X_1, \ldots, X_n) - E[f(X_1, \ldots, X_n)] \ge +\delta] \le e^{-\frac{2\delta^2}{\sum_{i=1}^{n}{c_i^2}}}; \label{mcdiarmid-ge}\\
&\Pr[f(X_1, \ldots, X_n) - E[f(X_1, \ldots, X_n)] \le -\delta] \le e^{-\frac{2\delta^2}{\sum_{i=1}^{n}{c_i^2}}}. \nonumber
\end{align}

\section{Almost Like OneMax:\\Conditions for the $\onell$~GA\\to Behave Well on a Random 3-CNF Formula}

In this section, we formulate and prove conditions which are enough for the $\onell$~GA to have local fitness-distance correlation.

In the Doerr, Neumann and Sutton paper~\cite{doerr-neumann-sutton-oneplusone-cnf}, two results were proven. When the clause density $m / n$ is $\Omega(n)$, then a strong fitness distance correlation is exhibited. Apart from an exponentially small failure probability, the random instance is such that any possible one-bit flip leads to a fitness gain or loss (depending on whether the distance decreases or not) of $\Theta(m/n)$. In particular, the fitness function is such that there are no misleading one-bit flips. On such a function, the classic ``multiplicative drift with the fitness'' proof for \onemax can be imitated and easily yields an $O(n \log n)$ optimization time (in expectation and with high probability)~\cite[Theorem 2]{doerr-neumann-sutton-oneplusone-cnf}.

For smaller clause densities $m / n = \Omega(\log n)$, with probability $1 - o(1)$, taken over both the random 3-CNF formula and the random decisions of the optimization process, also an $O(n \log n)$ optimization time is observed~\cite[Theorem 3]{doerr-neumann-sutton-oneplusone-cnf}. The difference to the previous setting is that now there may be misleading one-bit flips, however, they are rare enough that a typical run does not encounter them.

In both these theorems, the \oea with mutation rate $1/n$ is regarded. Consequently, with constant probability exactly one bit is flipped. This together with the fitness-distance correlations exhibited is exploited to compute the drift. In contrast to this, the $\onell$~GA uses larger mutation rates, meaning that the typical distance between parent and offspring is larger. In addition, it does not need to estimate whether parent or offspring is closer to the optimum, but it needs to select among several offspring the one with slightly smaller distance. 

For example, in the case $\lambda=10$ consider a certain point in time when the parent individual has already a decent fitness. Then most offspring in the mutation phase will have a genotypic distance that is worse than that of the parent by a margin of 10. An offspring which has flipped a single bit which is missing in the parent will have a distance of 8. To be successful, this 20\% advantage has to be visible for the $\onell$~GA via a sufficiently strong fitness-distance correlation.

The common sense suggests that the bigger the $\lambda$ is, the more problems the algorithm should have in detecting a ``good'' offspring. On the other hand, the bigger the clause density $m / n$ is, the simpler it should be for the algorithm to handle bigger values of $\lambda$, as the problem becomes more similar to \textsc{OneMax}.

First, we investigate how the average fitness of a search point at a distance $d$ from the optimum looks like, and how a difference of two such values behaves asymptotically depending on the difference between distances.

\begin{lemma}\label{lemma:exact-average-probability}
    The probability for a random 3-CNF clause $C$, consisting of distinct variables and satisfied by the planted assignment $x^{*}$, to be also satisfied by an assignment $x$ with the Hamming distance $d = d(x)$ from $x^{*}$, is
\begin{equation*}
    P(n, d) = \frac{6 \cdot \binom{n}{3} + \binom{n - d}{3}}{7 \cdot \binom{n}{3}}.
\end{equation*}
\end{lemma}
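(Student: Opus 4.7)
The plan is a straightforward counting argument based on inclusion–exclusion over the universe of $3$-CNF clauses with distinct variables. I would set up the universe, identify the ``bad'' clauses (those falsified by $x^*$ and those falsified by $x$), intersect them, and then convert the counts into a conditional probability.

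First I would count the universe. A clause of length $3$ with distinct variables is specified by a choice of $3$ variables (out of $n$) and a sign for each of them, giving exactly $8\binom{n}{3}$ possible clauses. The planted assignment $x^* = (1,\ldots,1)$ falsifies a clause if and only if every literal in it is negative; there is exactly one such clause per choice of $3$ variables, so the number of clauses falsified by $x^*$ equals $\binom{n}{3}$, and the number of clauses satisfied by $x^*$ is therefore $7\binom{n}{3}$. This denominator matches the statement.

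Next I would count the clauses that are falsified by \emph{both} $x^*$ and $x$. By the previous paragraph, such a clause must be of the form $\neg v_{i_1}\vee\neg v_{i_2}\vee\neg v_{i_3}$; for $x$ to falsify it, we need $x_{i_1}=x_{i_2}=x_{i_3}=1$, i.e.\ each $i_j$ must lie in the set of $n-d$ positions on which $x$ agrees with $x^*$. Hence the number of jointly-falsified clauses equals $\binom{n-d}{3}$. I would also note that the number of clauses falsified by $x$ alone is again $\binom{n}{3}$, because for each choice of $3$ variables the signs are forced by the values of $x$ at those positions.

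Finally, I would apply inclusion–exclusion in the form
\begin{equation*}
|A\cap B| \;=\; |U| - |\bar A| - |\bar B| + |\bar A\cap \bar B|
\;=\; 8\binom{n}{3} - 2\binom{n}{3} + \binom{n-d}{3}
\;=\; 6\binom{n}{3} + \binom{n-d}{3},
\end{equation*}
where $U$ is the universe of clauses, $A$ (resp.\ $B$) the set of clauses satisfied by $x^*$ (resp.\ $x$), and bars denote complements. Since the random clause $C$ in the lemma is uniform over $A$, the desired probability is $|A\cap B|/|A|$, which gives exactly the stated $P(n,d)$. There is no real obstacle here; the only thing to be careful about is the double-negation observation (``falsified by $x^*$ forces all-negative literals'') that makes the intersection count depend only on the $n-d$ matching positions, and the fact that the distinct-variables assumption is what makes the $\binom{n-d}{3}$ count clean rather than requiring a multinomial over repeated indices.
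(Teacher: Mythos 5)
Your proof is correct and rests on the same elementary counting facts as the paper's: exactly one clause per variable triple is falsified by any given assignment (the signs are forced), and the jointly falsified clauses are exactly the all-negative clauses on triples from the $n-d$ agreement positions, giving $\binom{n-d}{3}$. The paper simply counts the clauses of $\CC^*$ \emph{not} satisfied by $x$ directly as $\binom{n}{3}-\binom{n-d}{3}$ rather than routing through inclusion--exclusion over the full universe of $8\binom{n}{3}$ clauses, but this is the same computation in a different order.
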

\begin{proof}
    There are $\binom{n}{3}$ ways to choose a set of three different variables. For three fixed variables, there are 7 ways to choose their signs in a way that the resulting clause is satisfied by $x^*$. Hence the set $\CC^*$ of all clauses satisfied by $x^*$ has cardinality $|\CC^*| = 7\cdot\binom{n}{3}$. For a clause $C \in \CC^*$ to be not satisfied by $x$, we need that $C$ contains at least one variable in which $x$ and $x^*$ differ; there are exactly $\binom{n}{3} - \binom{n-d}{3}$ such sets of three variables. For each such set, there is exactly one way of setting their signs in such a way that the resulting clause is not satisfied by $x$ (and these signs make the clause satisfied by $x^*$). Consequently, there are exactly $\binom{n}{3} - \binom{n-d}{3}$ clauses in $\CC^*$ which are not satisfied by $x$, giving the claim.
\end{proof}

\begin{lemma}\label{lemma:asymptotic-difference}
    If $n - d = \Theta(n)$ and $\ell = o(n)$, then $P(n, d) - P(n, d + \ell) = \Theta(\ell / n).$
\end{lemma}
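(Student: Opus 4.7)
The plan is to plug the closed form from Lemma~\ref{lemma:exact-average-probability} into the difference, cancel the common $6\binom{n}{3}$ term, and estimate the resulting expression by expanding the falling factorials. Concretely, I would write
\begin{equation*}
  P(n,d) - P(n, d+\ell) = \frac{\binom{n-d}{3} - \binom{n-d-\ell}{3}}{7 \binom{n}{3}}
\end{equation*}
and then reduce the claim to showing that the numerator is $\Theta(\ell n^2)$, since $7\binom{n}{3} = \Theta(n^3)$ gives the matching $n^3$ in the denominator.

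Next, I would set $k := n-d$, so by assumption $k = \Theta(n)$ and $k-\ell = \Theta(n)$ as well (because $\ell = o(n)$, so $k-\ell \ge k/2$ for large $n$). Using $6\binom{k}{3} = k(k-1)(k-2)$, I would expand
\begin{equation*}
  6\bigl(\tbinom{k}{3} - \tbinom{k-\ell}{3}\bigr) = k(k-1)(k-2) - (k-\ell)(k-\ell-1)(k-\ell-2).
\end{equation*}
Treating this as a polynomial difference $g(k) - g(k-\ell)$ with $g(x) = x(x-1)(x-2)$, the derivative $g'(x) = 3x^2 - 6x + 2$ is $\Theta(n^2)$ uniformly on $[k-\ell, k]$, and by the mean value theorem the difference equals $\ell \cdot g'(\xi)$ for some $\xi \in [k-\ell,k]$, hence is $\Theta(\ell n^2)$. (Alternatively, one can expand directly: the leading term $k^3 - (k-\ell)^3 = \ell(3k^2 - 3k\ell + \ell^2) = \Theta(\ell n^2)$ dominates the lower-order contributions, which are $O(\ell n)$.)

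Combining the two estimates,
\begin{equation*}
  P(n,d) - P(n,d+\ell) = \frac{\Theta(\ell n^2)}{\Theta(n^3)} = \Theta(\ell/n),
\end{equation*}
which proves the lemma. There is no real obstacle here; the only care needed is to verify that both $k$ and $k-\ell$ are $\Theta(n)$ so that the polynomial estimates are uniform, which follows from the hypotheses $n-d = \Theta(n)$ and $\ell = o(n)$.
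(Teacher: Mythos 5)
Your proposal is correct and follows essentially the same route as the paper: substitute the closed form from Lemma~\ref{lemma:exact-average-probability}, cancel the constant, and estimate the difference $\binom{n-d}{3}-\binom{n-d-\ell}{3}$ as $\Theta(\ell n^2)$ against the $\Theta(n^3)$ denominator (the paper does this by explicit polynomial expansion, which you also note as an alternative; your mean-value-theorem packaging is only a cosmetic variation). The one point of care you flag — that both $n-d$ and $n-d-\ell$ are $\Theta(n)$ so the estimates are uniform — is exactly what the hypotheses guarantee, so there is no gap.
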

\begin{proof}
    From Lemma~\ref{lemma:exact-average-probability}, we compute
    \begin{align*}
        P(n, d) - P(n, d + \ell) &= \frac{\binom{n-d}{3} - \binom{n-d-\ell}{3}}{\binom{n}{3}} \\
        &= \frac{3(n-d)^2\ell -3(n-d-1)\ell^2 - 6(n-d)\ell + \ell^3 + 2\ell}{n(n-1)(n-2)} \\
        &= \frac{\Theta(n^2) \cdot \ell + \Theta(n) \cdot \ell^2 + \ell^3 \pm o(n^2)}{\Theta(n^3)} = \Theta(\ell / n),
    \end{align*}
    where we used that $n \ell^2 = o(n^2 \ell)$ and $\ell^3 = o(n^2 \ell)$, which both follow from the assumption $\ell=o(n)$.
\end{proof}

\begin{corollary}\label{corollary:fitness-averages}
    Consider a random 3-CNF formula on $n$ variables and $m$ clauses with the planted assignment $x^{*}$. The expected fitness $f_{\avg}(d)$ of any search point $x$ with the Hamming distance $d = d(x)$ from $x^{*}$ is
    \begin{equation*}
        f_{\avg}(d) = m \cdot \frac{6 \cdot \binom{n}{3} + \binom{n - d}{3}}{7 \cdot \binom{n}{3}}.
    \end{equation*}
    For $n - d = \Theta(n)$ and $\ell = o(n)$, we have
    \begin{equation*}
        f_{\avg}(d) - f_{\avg}(d - \ell) = \Theta(m \ell / n).
    \end{equation*}
\end{corollary}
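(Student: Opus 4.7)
The plan is to derive both assertions as immediate consequences of the two preceding lemmas, using only linearity of expectation and an index shift.

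For the first assertion, I would write $f(x) = \sum_{i=1}^{m} \mathbb{1}[x \text{ satisfies } C_i]$, where $C_1, \ldots, C_m$ are the clauses of the random formula $F$. In the planted-solution model, these clauses are independent and uniformly distributed over $\CC^* := \{\text{length-3 clauses on distinct variables satisfied by } x^*\}$, which by the counting in Lemma~\ref{lemma:exact-average-probability} has cardinality $7\binom{n}{3}$. By linearity of expectation, $f_{\avg}(d) = E[f(x)] = m \cdot \Pr[C_1 \text{ satisfied by } x]$, and this probability is exactly $P(n,d)$ as computed in Lemma~\ref{lemma:exact-average-probability}. This yields the displayed formula, and in particular shows that $E[f(x)]$ depends on $x$ only through the Hamming distance $d(x)$.

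For the second assertion, subtracting the two expressions gives
\begin{equation*}
    f_{\avg}(d) - f_{\avg}(d - \ell) \;=\; m \cdot \bigl(P(n, d) - P(n, d - \ell)\bigr),
\end{equation*}
so it suffices to invoke Lemma~\ref{lemma:asymptotic-difference} with the role of $d$ played by $d - \ell$ and with the same step size $\ell$. Before applying it, I would verify the two hypotheses of that lemma: $\ell = o(n)$ is given, and $n - (d - \ell) \ge n - d = \Theta(n)$ since $\ell \ge 0$, while also $n - (d - \ell) = n - d + \ell \le n$, so $n - (d-\ell) = \Theta(n)$ as required. Lemma~\ref{lemma:asymptotic-difference} then gives $|P(n, d-\ell) - P(n, d)| = \Theta(\ell/n)$, which multiplied by $m$ produces the claim $|f_{\avg}(d) - f_{\avg}(d-\ell)| = \Theta(m\ell/n)$ (the sign being negative, since $P(n, \cdot)$ is decreasing in its second argument).

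There is no real obstacle: the whole argument is a one-line application of linearity of expectation followed by substitution into Lemma~\ref{lemma:asymptotic-difference}. The only sliver of care needed is to check that the hypotheses of Lemma~\ref{lemma:asymptotic-difference} survive the shift $d \mapsto d - \ell$, which they do because $\ell = o(n)$ keeps $n - (d-\ell)$ linear in $n$.
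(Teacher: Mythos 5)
Your proposal is correct and matches the paper's argument, which likewise derives the corollary from linearity of expectation together with Lemmas~\ref{lemma:exact-average-probability} and~\ref{lemma:asymptotic-difference}. Your extra care in shifting $d \mapsto d-\ell$, checking that $n-(d-\ell)=\Theta(n)$ survives the shift, and noting the sign convention is exactly the (routine) verification the paper leaves implicit.
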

\begin{proof}
    Follows trivially from the definition of fitness, Lemmas~\ref{lemma:exact-average-probability} and~\ref{lemma:asymptotic-difference}, and linearity of expectation.
\end{proof}

As we see, the average fitness values demonstrate a good fitness-distance correlation. However, the actual fitness values of concrete search points may deviate quite far apart from the average values when the concrete 3-CNF formula is fixed. To show a good performance of the $\onell$ GA, we now show a stronger fitness-distance correlation for offspring from the same parent. For a number $\lambdabound$ to be made precise later, we define the following.


\begin{definition}\label{def:well-behaved}
    Consider a search point $x$ with a distance $d > 0$ from the planted assignment $x^{*}$. Let $\ell \in [1..\lambdabound]$.
    \begin{itemize}
        \item The set $X_{\ell}^{-} := X_{\ell}^{-}(x)$ of \emph{$\ell$-bad offsping} is the set of all search points produced from flipping in $x$ exactly $\ell$ bits which coincide in $x$ and $x^{*}$.
        \item The set $X_{\ell}^{+} := X_{\ell}^{+}(x)$ of \emph{$\ell$-good offspring} is the set of all search points produced from flipping in $x$ exactly $\ell$ bits of which at least one is different in $x$ and $x^{*}$.
        \item The point $x$ is \emph{well-behaved} if for all $\ell \in [1..\lambdabound]$ 
        \begin{enumerate}
        \item[(i)] there are at most $|X_{\ell}^{-}| / \lambdabound$ elements $x^{-} \in X_{\ell}^{-}$ such that $f(x) - f(x^{-}) \le f_{\avg}(d) - f_{\avg}(d + \ell - 1)$, and
        \item[(ii)] there are at most $|X_{\ell}^{+}| / 2$ elements $x^{+} \in X_{\ell}^{+}$ such that $f(x) - f(x^{+}) \ge f_{\avg}(d) - f_{\avg}(d + \ell - 1)$.
        \end{enumerate}
    \end{itemize}
\end{definition}

The motivation for this definition is that whenever the current search point $x$ is well-behaved, the selection inside the mutation phase of the $\onell$~GA is able to distinguish with good probability an offspring with one of the missing bits guessed right from the offspring with no missing bit flipped.

\begin{lemma}\label{lemma:selection-works}
    Assume $x$ is the current best solution of the $\onell$~GA, $f(x) < m$, $x$ is well-behaved and the current value of $\lambda$ satisfies $\lambda \le \lambdabound$ for an integer $\lambdabound$. Then, whenever at least one good offspring appears at the mutation phase, the $\onell$~GA will choose one of them with at least constant probability as mutation winner.
\end{lemma}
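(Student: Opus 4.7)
The plan is to exploit the quantitative statements in Definition~\ref{def:well-behaved} to turn the well-behavedness of $x$ into independent per-offspring tail bounds around a natural fitness threshold, and then to use the independence of offspring to conclude. Let $\ell$ denote the mutation strength sampled in this iteration and set
\begin{equation*}
    T := f(x) - \bigl(f_{\avg}(d) - f_{\avg}(d+\ell-1)\bigr).
\end{equation*}
Because each offspring is formed by flipping a uniformly chosen $\ell$-subset of the bits of $x$, conditional on an offspring lying in $X_\ell^+$ it is distributed uniformly on $X_\ell^+$, and similarly for $X_\ell^-$. The two clauses of Definition~\ref{def:well-behaved} thus give, per offspring, that a good offspring has fitness strictly greater than $T$ with probability at least $1/2$, while a bad offspring has fitness at least $T$ with probability at most $1/\lambdabound$.

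Next I would condition on the multiplicities: write $g$ for the number of good offspring and $b = \lambda' - g$ for the number of bad offspring produced, where $\lambda' = [\lambda] \le \lambdabound$ is the actual offspring population size, and condition on the event $g \ge 1$ from the lemma's hypothesis. Because the $\lambda'$ mutations are performed independently, given $(g,b)$ the $g$ good fitness values and the $b$ bad fitness values are mutually independent. Hence the probability that some good offspring has fitness $> T$ is at least $1-(1/2)^g \ge 1/2$, and independently the probability that no bad offspring has fitness $\ge T$ is at least $(1 - 1/\lambdabound)^b \ge (1 - 1/\lambdabound)^{\lambdabound}$.

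When both of these independent events occur, the maximum offspring fitness is strictly greater than $T$ and is attained only by good offspring, so the mutation winner, selected uniformly among maximum-fitness offspring, must itself be good. Since $(1 - 1/\lambdabound)^{\lambdabound}$ is bounded below by an absolute positive constant for all integer $\lambdabound \ge 1$ (trivially $1$ when $\lambdabound = 1$, in which case $b = 0$ whenever $g \ge 1$, and at least $1/4$ for $\lambdabound \ge 2$, approaching $1/e$ as $\lambdabound \to \infty$), multiplying the two independent lower bounds yields a constant lower bound on $\Pr[\text{mutation winner is good} \mid g, b]$ uniformly in $(g,b)$ with $g \ge 1$, which after averaging over $(g,b)$ gives the claimed constant conditional probability. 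I do not anticipate a serious obstacle: the only care-point is to maintain the correct strict-versus-non-strict inequalities so that a bad offspring can neither tie with nor exceed a surviving good one, which is exactly how Definition~\ref{def:well-behaved} is set up.
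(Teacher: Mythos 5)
Your argument is, at its core, the same as the paper's: you derive per-offspring tail bounds around the threshold $T$ from the two clauses of Definition~\ref{def:well-behaved}, use the independence of the $\lambda'$ mutations to bound the probability that all bad offspring fall strictly below $T$ by $(1-1/\lambdabound)^{b}$ and the probability that some good offspring exceeds $T$ by $1/2$, and multiply. Your handling of the conditional uniformity on $X_\ell^{+}$ and $X_\ell^{-}$ and of the strict versus non-strict inequalities is in fact slightly more careful than the paper's write-up.

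There is, however, one genuine gap: you treat the sampled mutation strength $\ell$ as if Definition~\ref{def:well-behaved} always applied to it, but that definition only quantifies over $\ell \in [1..\lambdabound]$. Since $\ell \sim \mathcal{B}(n,\lambda/n)$, the event $\ell > \lambdabound$ has probability that can be close to $1/2$ (for instance when $\lambda = \lambdabound$), and on that event the per-offspring bounds ``a bad offspring has fitness at least $T$ with probability at most $1/\lambdabound$'' and ``a good offspring exceeds $T$ with probability at least $1/2$'' have no justification whatsoever, so your argument as written only proves the claim conditional on $\ell \le \lambdabound$. The paper closes exactly this hole by observing that the median of $\mathcal{B}(n,\lambda/n)$ is at most $\lceil\lambda\rceil \le \lambdabound$ (using that $\lambdabound$ is an integer), so $\Pr[\ell \le \lambdabound] \ge 1/2$; folding in this extra factor of $1/2$ still leaves a constant overall probability (the paper obtains $e^{-1}/4$). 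Adding this one observation makes your proof complete.
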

\begin{proof}
    Assume that, in the current step, $\ell$ was sampled from $\mathcal{B}(n, \lambda / n)$ in such a way
that $\ell \le \lambdabound$. As the median of $\mathcal{B}(n, \lambda / n)$ is at most $\lceil \lambda \rceil$, and $\lceil \lambda \rceil \le \lambdabound$ as $\lambdabound$ is integer, this happens with probability of at least $1/2$.

    Assume that among the \revisepar{$[\lambda]$}{Can we avoid the notation $[x]$? Many people use $[x]=[1..x]$. Can we just round up the $\lambda$?} offspring created in the mutation phase, $g \ge 1$ are $\ell$-good and $[\lambda] - g$ are $\ell$-bad. The probability that all $\ell$-bad offspring have a fitness value below $f(x) - (f_{\avg}(d(x)) - f_{\avg}(d(x) + \ell - 1))$ is at least
    \begin{equation*}
        \left(1 - \frac{1}{\lambdabound}\right)^{[\lambda] - g} \ge \left(1 - \frac{1}{\lambdabound}\right)^{{\lambdabound} - 1} \ge e^{-1}.
    \end{equation*}
    The probability of, say, the first $\ell$-good offspring to have a fitness value above $f(x) - (f_{\avg}(d(x)) - f_{\avg}(d(x) + \ell - 1))$ is at least $1/2$. Thus, the probability that the $\onell$~GA chooses an offspring which is not $\ell$-bad, whenever it exists, is at least $e^{-1} / 4$.
\end{proof}

Now we show that for all sufficiently small values of $\lambdabound$, all interesting search points $x$ are well-behaved with an overwhelming probability. In a sense, this is the core of the main result of this paper.

\begin{theorem}\label{theorem:everyone-behaves-well}
    If $\lambdabound$ is an integer satisfying $\lambdabound = o(\min\{n, (m / n)^{1/4}\})$, then with probability $e^{-\omega(n)}$ (taken over possible 3-CNF formulas satisfying the planted assignment $x^{*}$) all $x$ with $n - d(x) = \Theta(n)$ are well-behaved.
\end{theorem}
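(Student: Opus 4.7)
The plan is, for each fixed search point $x$ with $n-d(x) = \Theta(n)$ and each $\ell \in [1..\lambdabound]$, to bound the probability (over the random formula $F$) that $x$ violates one of the two conditions in Definition~\ref{def:well-behaved}, and then to conclude by a union bound over the at most $2^n$ relevant $x$ and the $\lambdabound$ values of $\ell$. A per-$(x,\ell)$ failure probability of $e^{-\omega(n)}$ will suffice, since $\omega(n)$ can absorb the extra $n\ln 2$ coming from the union bound.

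A na\"ive application of McDiarmid's inequality~(\ref{mcdiarmid-ge}) directly to the count $N^-_\ell$ of bad offspring violating condition~(i) fails: altering one clause can flip the violation status of $\Theta(\binom{n-d-1}{\ell-1})$ offspring simultaneously, which makes the bounded-differences constants too large. I would therefore invoke the continuous-relaxation trick hinted at in the introduction. Setting $Y_{x^-} := f(x)-f(x^-)$, $T := f_{\avg}(d)-f_{\avg}(d+\ell-1)$, and $a = \Theta(m/n)$ chosen as a small enough multiple of the gap $f_{\avg}(d+\ell-1)-f_{\avg}(d+\ell)$ from Lemma~\ref{lemma:asymptotic-difference}, define the Lipschitz ramp
\[
g(y) := \max\!\left(0,\min\!\left(1,\tfrac{T+a-y}{a}\right)\right).
\]
Then $Y_{x^-}\le T$ forces $g(Y_{x^-})=1$, so $N^-_\ell \le G := \sum_{x^- \in X_\ell^-} g(Y_{x^-})$. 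Modifying a single clause changes each affected $Y_{x^-}$ by at most $2$, hence each summand by at most $2/a$; since at most $3\binom{n-d-1}{\ell-1}$ offspring are affected per clause, we obtain bounded-differences constants $c_i = O(\binom{n-d-1}{\ell-1}/a)$, an $a$-fold improvement over the na\"ive bound.

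Next I would expand $Y_{x^-} = \sum_{C\in F} Y_C$ as a sum of the $m$ independent clause contributions $Y_C \in \{-1,0,1\}$, where $\Pr[Y_C\ne 0] \le \kappa := O(\ell/n)$ (the chance a random clause touches one of the $\ell$ flipped correct variables). By Lemma~\ref{lemma:asymptotic-difference}, $E[Y_{x^-}] - (T+a) = \Theta(m/n)$ for $a$ small enough, and the Bernstein-type bound~(\ref{eq:chernoffvar}) yields
\[
\Pr[Y_{x^-} \le T+a] \le \exp\!\left(-\Omega(m/(n\ell))\right).
\]
Since $\lambdabound = o((m/n)^{1/4})$ forces $m/n\to\infty$ and $m/(n\ell) \ge m/(n\lambdabound) = \omega((m/n)^{3/4})$, this per-offspring tail is at most $1/(2\lambdabound)$ for large $n$, hence $E[G] \le |X_\ell^-|/(2\lambdabound)$. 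A McDiarmid upper-tail bound then gives
\[
\Pr\!\left[G \ge \tfrac{|X_\ell^-|}{\lambdabound}\right] \le \exp\!\left(-\Omega\!\left(\tfrac{a^2(n-d)^2}{\lambdabound^2 m\ell^2}\right)\right) = \exp\!\left(-\Omega\!\left(\tfrac{m}{\lambdabound^2\ell^2}\right)\right),
\]
and $\ell\le\lambdabound$ together with $\lambdabound^4 = o(m/n)$ forces the exponent to be $\omega(n)$.

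Condition~(ii) is handled symmetrically: an $\ell$-good offspring $x^+$ flipping $k\ge 1$ wrong bits has $E[Y_{x^+}] = f_{\avg}(d)-f_{\avg}(d+\ell-2k) \le T-\Theta(m/n)$, so the reversed ramp $\tilde g(y) := \max(0,\min(1,(y-T+a)/a))$ upper-bounds the indicator of $Y_{x^+} \ge T$ and the identical Bernstein--McDiarmid chain applies; the combinatorics use $\binom{n-1}{\ell-1}$ instead of $\binom{n-d-1}{\ell-1}$, but the final exponent is still governed by $m/(\lambdabound^2\ell^2)$. The main technical obstacle I anticipate is making the Bernstein step uniform over all $\ell\in[1..\lambdabound]$ and all $d$ with $n-d = \Theta(n)$ --- in particular, tracking the constant hidden in the variance of a clause contribution and verifying that the choice of $a$ as a small enough multiple of $m/n$ really produces a nonvanishing gap $E[Y]-T-a$; Lemma~\ref{lemma:asymptotic-difference} provides the needed uniformity as long as $\lambdabound = o(n)$, which is part of the hypothesis.
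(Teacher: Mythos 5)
Your proposal is correct and follows essentially the same route as the paper's proof: the same union bound over the $2^n$ points and $O(n)$ values of $\ell$, the same Lipschitz-ramp relaxation of the violation indicator (the paper's $\tilde{V}$ interpolating between $f^{=}$ and $f^{\equiv}$ is exactly your $g$ with $a=\Theta(m/n)$), the same bounded-differences computation giving per-clause influence $|X_\ell^-|\cdot O(\ell/m)$, and the same Bernstein-type bound~\eqref{eq:chernoffvar} to show the expected relaxed count is $o(|X_\ell^-|/\lambdabound)$, all culminating in the identical exponent $\Omega(m/(\ell^2\lambdaboundsq))=\omega(n)$. The only cosmetic differences are that you certify $E[G]\le|X_\ell^-|/(2\lambdabound)$ directly rather than showing the ratio is $o(1)$, and you count $3$ rather than $6$ affected variables per clause swap, neither of which changes anything.
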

\begin{proof}
    There are $2^n$ possible search points $x$ in total (including the search points which are too far from the optimum), and for every search point $2\lambdabound = O(n)$ statements about sets $X_{\ell}^{-}(x)$ and $X_{\ell}^{+}(x)$ need to be proven. So by a union bound, it is enough to prove that for each $\ell \in [1..\lambdabound]$ and each $x$ with probability $e^{-\omega(n)}$ the sets $X_{\ell}^{-}(x)$ and $X_{\ell}^{+}(x)$ are as in the definition of well-behaved.
    
    For the remainder of this proof, we fix an $x$ and a value of $\ell$. Let $d := d(x)$. We prove only the statement on $X^-(x)$. The statement on $X^+(x)$ can be proven in the same way and then even yields a sharper bound as required in Definition~\ref{def:well-behaved}.
    
    For convenience, we use the short-hand notation $X^{-} = X_{\ell}^{-}(x)$. For an $x^{-} \in X^{-}$, the fitness loss $f(x) - f(x^-)$ is a random variable which is determines by the random formula $F$. We denote by $f^{-} = f_{\avg}(d + \ell)$ the average value of $f(x^{-})$ and by $f^{=} = f_{\avg}(d + \ell - 1)$ the threshold value from Definition~\ref{def:well-behaved}. We use an indicator random variable $V(x^{-})$ for the event that $x^-$ violates the fitness constraint in Definition~\ref{def:well-behaved}, that is, 
    \begin{equation*}
    V(x^{-}) := \begin{cases}
        1, & f(x) - f(x^{-}) \le f_{\avg}(d) - f^{=}, \\
        0, & \text{otherwise}.
    \end{cases}
    \end{equation*}
    
    The statement we want to prove is equivalent to 
    \begin{equation}
    \Pr\left[\,\sum_{x^{-} \in X^{-}} V(x^{-}) > \frac{|X^{-}|}{\lambdabound}\right] < e^{-\omega(n)}. \label{x-is-okay-part1}
    \end{equation}
    
    This expression calls for one of the Chernoff bounds to be applied. However, the resulting bounds are not strong enough for our aim to be achieved. For this reason, we introduce a new random variable, $\tilde{V}(x^{-})$, which is less sensitive to the change of a single clause in the random instance.
    Let $f^{\equiv} = (f^{-} + f^{=}) / 2$. Then
    \begin{equation*}
     \tilde{V}(x^{-}) := \begin{cases}
         0, & f(x) - f(x^{-}) \ge f_{\avg}(d) - f^{\equiv}; \\
         \frac{f(x) - f(x^{-}) - (f_{\avg}(d) - f^{=})}{f^{=} - f^{\equiv}}, & f_{\avg}(d) - f^{\equiv} \ge f(x) - f(x^{-}) \ge f_{\avg}(d) - f^{=}; \\
         1, & f(x) - f(x^{-}) \le f_{\avg}(d) - f^{=}.
    \end{cases}
    \end{equation*}
     
    As $\tilde{V}(x^{-}) \ge V(x^{-})$ with probability one, it is enough to show, instead of~\eqref{x-is-okay-part1}, that
    \begin{equation}
     \Pr\left[\,\sum_{x^{-} \in X^{-}} \tilde{V}(x^{-}) > \frac{|X^{-}|}{\lambdabound}\right] < e^{-\omega(n)}. \label{x-is-okay-part1-tilde}
    \end{equation}
     
    For brevity, we write $\tilde{V}^{\Sigma} := \sum_{x^{-} \in X^{-}} \tilde{V}(x^{-})$. We interpret the random variable $\tilde{V}^{\Sigma}$ as a function of the independent uniformly distributed random variables $C_1, \ldots, C_m$ which define the random formula $F$. Recall that, by Corollary~\ref{corollary:fitness-averages}, $f^{=} - f^{-} = \Theta(m/n)$, and so is $f^{=} - f^{\equiv}$. Thus replacing a single $C_i$ in $F$ by $C'_i$ introduces a change of at most $2 / (f^{=} - f^{\equiv}) = \Theta(n / m)$ in the value $\tilde V(x^-)$, however, only for those $x^{-}$ for which $x$ and $x^-$ differ in one of the at most $6$ variables contained in $C_i \cup C'_i$ (for all other $x^-$, the value of $\tilde V(x^-)$ does not change). The number of the former kind of $x^-$ is at most $\binom{n - d}{\ell} - \binom{n - d - 6}{\ell} = \sum_{i=0}^5 \big(\binom{n-d-i}{\ell} - \binom{n-d-(i+1)}{\ell}\big) = \sum_{i=0}^5 \binom{n-d-(i+1)}{\ell-1} \le 6 \binom{n-d-1}{\ell-1} = 6 \binom{n - d}{\ell} \frac{\ell}{n - d} = |X^{-}| \cdot\Theta(\ell / n)$. The maximum change of $\tilde{V}^{\Sigma}$ inflicted by changing one clause hence is $|X^{-}| \cdot O(\ell / m)$.
      
    We use this statement to apply McDiarmid's inequality, see~\eqref{mcdiarmid-ge}, and obtain
    \begin{align}
      \Pr\left[\tilde{V}^{\Sigma} > |X^{-}| / \lambdabound\right]
        &= \Pr\left[\tilde{V}^{\Sigma} > E\left[\tilde{V}^{\Sigma}\right] + \left(|X^{-}| / \lambdabound - E\left[\tilde{V}^{\Sigma}\right]\right)\right] \nonumber\\
        &\le \exp\left(-\frac{2\left(|X^{-}| / \lambdabound - E\left[\tilde{V}^{\Sigma}\right]\right)^2}{m \cdot (|X^{-}| \cdot O(\ell / m))^2}\right) \nonumber\\
        &=   \exp\left(-\Omega\left(\frac{m \cdot \left(1 - \lambdabound \frac{E[\tilde{V}^{\Sigma}]}{|X^{-}|}\right)^2}{\ell^2 \lambdaboundsq}\right)\right). \label{failure-prob-raw}
    \end{align}
      
    To complete this bound, we need to show that the term $(1 - \lambdabound E[\tilde{V}^{\Sigma}] / |X^{-}|)$ is at least some positive constant. To do this with the least effort, we introduce the random variable $\overline{V}(x^{-})$ defined by
    \begin{equation*}
    \overline{V}(x^{-}) = \begin{cases}
          1, & f(x) - f(x^{-}) \le f_{\avg}(d) - f^{\equiv}, \\
          0, & \text{otherwise}
    \end{cases}
    \end{equation*}
    for all $x^{-} \in X^-$, and observe that it dominates $\tilde{V}(x^{-})$. Consequently, we have $E[\tilde{V}(x^{-})] \le E[\overline{V}(x^{-})] = \Pr[f(x) - f(x^{-}) \le f_{\avg}(d) - f^{\equiv}]$. By symmetry, these probabilities are identical for all $x^{-} \in X^-$. As $E[f(x^{-})] = f^{-}$, $f^{\equiv} - f^{-} = \Theta(m / n)$, and \startnewtext $E[f(x) - f^{-}] = \Theta(\ell m / n)$\finishnewtext, the bound from~\eqref{eq:chernoffvar} gives
    \begin{align*}
      \Pr[f(x) &- f(x^{-}) \le f_{\avg}(d) - f^{\equiv}] \\
          &= \Pr[f(x) - f(x^{-}) \le E[f(x)] - E[f(x^{-})] + f^{-} - f^{\equiv}] \\
          &= \Pr[f(x) - f(x^{-}) \le E[f(x) - f(x^{-})] - \Theta(m/n)] \\
          &\le \max\{\exp(-\Omega(m^2/n^2) / (12\ell m/n)), \exp(-(3/8) \Omega(m/n))\}\\
          &= \max\{\exp(-\Omega(m/\ell n)), \exp(-\Omega(m/n))\}\\
          &= \exp(-\Omega(m/\ell n)).
    \end{align*}
      
    Note that $f(x) - f(x^{-})$ is the sum of $m$ independent random variables describing the influence of each of the $m$ random clauses on this expression. With probability at least $1-3\ell/n$, a random clause contains none of the $\lambda$ variables $x$ and $x^-$ differ in. In this case, the clause contributes equally to $f(x)$ and $f(x^-)$, hence zero to the difference.
    
    \startnewtext
    To finish this part, we shall note that we can estimate the subtrahend in the term $(1 - \lambdabound E[\tilde{V}^{\Sigma}] / |X^{-}|)$ as follows:
    \begin{align*}
        \lambdabound \cdot \frac{E[\tilde{V}^{\Sigma}]}{|X^{-}|}
        &= \lambdabound \cdot \frac{\sum_{x^{-} \in X^{-}} E[\tilde{V}(x^{-})]}{|X^{-}|}
        \le \lambdabound \cdot \frac{\sum_{x^{-} \in X^{-}} E[\overline{V}(x^{-})]}{|X^{-}|}
        \\&= \lambdabound \cdot E[\overline{V}(x^{-})] = \lambdabound \cdot \exp(-\Omega(m / \ell n))
        \le \lambdabound \cdot \exp(-\Omega(m / \lambdabound n)).
    \end{align*}

    By the theorem statement, we safely assume that $\lambdabound = o((m / n)^{1/4})$. This allows further refining the estimation:
    \begin{equation*}
        \lambdabound \cdot \exp(-\Omega(m/\lambdabound n)) =
        o\left(\left(\frac{m}{n}\right)^{1/4}\right) \cdot \exp\left(-\omega\left(\left(\frac{m}{n}\right)^{3/4}\right)\right) = o(1),
    \end{equation*}
    which means that $1 - \lambdabound E[\tilde{V}^{\Sigma}] / |X^{-}| = 1 - o(1)$.
    \finishnewtext
    Consequently, from~\eqref{failure-prob-raw} we obtain
    \begin{equation*}
    \Pr\left[\tilde{V}^{\Sigma} > |X^{-}| / \lambdabound\right] \le \exp\left(-\Omega\left(\frac{m}{\ell^2 \lambdaboundsq}\right)\right) \le e^{-\omega(n)},
    \end{equation*}
    where the second estimate follows from $\ell \le \lambdabound$ and $\lambdabound = o((m / n)^4)$. This finished the proof.
\end{proof}

Now we are ready to prove the main result of this section.

\begin{theorem}\label{3cnf-bounding-theorem}
    Consider solving random 3-CNF formulas with planted solutions on $n$ variables and $m$ clauses by the $\onell$~GA, where $m / n > c \log n$ for large enough constant $c$. If there exists some integer $\lambdabound = o(\min\{n, (m / n)^{1/4}\})$ such that, during the entire run of the $\onell$~GA, $\lambda \le \lambdabound$, and the algorithm starts from a random assignment, then with probability $1 - o(1)$, taken both over the random 3-CNF formulas and the algorithm decisions, the algorithm will demonstrate, in every point, the same progress, divided by at most constant, as the same algorithm optimising \textsc{OneMax} on $n$ variables.
\end{theorem}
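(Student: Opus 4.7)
The plan is to reduce the analysis to a per-iteration, constant-factor comparison with the $\onell$~GA on \onemax. First, I would apply Theorem~\ref{theorem:everyone-behaves-well} with the given $\lambdabound$ to conclude that, apart from an $e^{-\omega(n)}$ event over the random formula, every $x$ with $n-d(x)=\Theta(n)$ is well-behaved. A Chernoff bound on the random initialization gives $d(x_0)=n/2\pm O(\sqrt{n\log n})$, and since the $\onell$~GA is elitist, $d(x)$ never increases, so the condition $n-d(x)=\Theta(n)$ is maintained throughout the run. Together with a union bound over the polynomial number of iterations (comfortably dominated by the trivial $O(n\log n)$ \oea-style bound), these two failure events contribute only $o(1)$, which matches the claim.

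Next, I would carry out a single-iteration comparison. Fix a well-behaved parent $x$ with $d(x)>0$ and condition on the sampled mutation strength $\ell\le\lambdabound$, which by the median argument of Lemma~\ref{lemma:selection-works} happens with probability at least $1/2$. The probability that at least one of the $[\lambda]$ mutation offspring is $\ell$-good depends only on $d$, $\ell$, $n$, $\lambda$, and is therefore identical to the \onemax case. Conditional on this event, Lemma~\ref{lemma:selection-works} guarantees that an $\ell$-good offspring is selected as mutation winner $x'$ with at least a constant probability. Thus the joint ``good mutation winner'' event occurs with probability at least a constant fraction of the \onemax probability, i.e., $\Omega(\min\{1,d\lambda^2/n\})$.

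For the crossover phase, the key observation is that every crossover offspring $y^{(i)}$ is a Hamming neighbor of $x$ at some distance $k\le\ell$, obtained by flipping an independent $\mathrm{Bernoulli}(1/\lambda)$ subset of the positions where $x$ and $x'$ differ. Hence the fitness-distance properties encoded by well-behavedness of $x$ for all $k\in[1..\lambdabound]$ apply to the $y^{(i)}$ as well. The probability that some $y^{(i)}$ is strictly closer to $x^*$ than $x$ (equivalently: inherits at least one ``good'' flip of $x'$ and none of the ``bad'' ones) depends only on $\lambda$ and the signature of $x\oplus x'$, and so equals the corresponding \onemax probability, which is $\Omega(1)$ whenever $x'$ has a good-flip surplus. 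Combining this with well-behavedness of $x$ shows that such a $y^{(i)}$ becomes the crossover winner with constant probability, again matching the \onemax analysis up to a constant factor. Multiplying the two constant-factor losses and comparing to the drift on \onemax gives the claimed pointwise constant-factor bound on progress.

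The main obstacle, and where most of the care is needed, is lifting the Lemma~\ref{lemma:selection-works} argument from the mutation phase to the crossover phase: Definition~\ref{def:well-behaved} bounds the \emph{global} fraction of deceptive points in $X_\ell^\pm(x)$, but the crossover samples a non-uniform distribution supported on subsets of $x\oplus x'$ and concentrated at a typical distance $\ell/\lambda\ll\lambdabound$. To handle this cleanly, I would split the analysis by the effective flipping distance $k$ of $y^{(i)}$, apply Definition~\ref{def:well-behaved} separately for each $k\le\lambdabound$, and then show, by the same ``at least one non-deceptive good vs.\ all non-deceptive bad'' calculation as in the proof of Lemma~\ref{lemma:selection-works}, that the probability of picking a profitable $y^{(i)}$ as crossover winner is a constant. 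Summing the per-iteration bounds across fitness levels then yields the claim that the pointwise progress of the $\onell$~GA on the random 3-CNF instance is, with probability $1-o(1)$, within a constant factor of that on \onemax.
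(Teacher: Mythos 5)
Your first half (Theorem~\ref{theorem:everyone-behaves-well} plus Lemma~\ref{lemma:selection-works} for the mutation phase) matches the paper. The gaps are in the second half. First, the claim that ``since the $\onell$~GA is elitist, $d(x)$ never increases'' is false: the algorithm is elitist with respect to $f$, not with respect to the distance $d$ to the planted solution, and the two are not perfectly correlated here. A crossover offspring with $f(y)\ge f(x)$ but $d(y)>d(x)$ can be accepted, so both the invariant $n-d(x)=\Theta(n)$ (needed to keep invoking well-behavedness) and the very notion of ``progress'' require a separate argument. The paper supplies it via Lemma~\ref{lemma-one-bit-flip-is-ok}: with probability $1-n^{-3}$ a one-bit-closer assignment has strictly larger fitness and a one-bit-farther one has strictly smaller fitness, and this is what controls acceptance and rejection in the final selection. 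Your proposal never invokes that lemma.

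Second, your crossover-phase plan does not go through as stated. Definition~\ref{def:well-behaved} compares offspring at a \emph{fixed} flip distance $\ell$ against the threshold $f_{\avg}(d)-f_{\avg}(d+\ell-1)$, which depends on $\ell$. The crossover offspring sit at varying distances $k$ from $x$, so a non-deceptive good offspring at distance $k$ is only guaranteed fitness above $f(x)-(f_{\avg}(d)-f_{\avg}(d+k-1))$, while a non-deceptive bad offspring at distance $k'<k$ is only guaranteed fitness below $f(x)-(f_{\avg}(d)-f_{\avg}(d+k'-1))$; the latter threshold is the smaller of the two, so the good offspring need not win the comparison. Moreover, winning the crossover selection is not enough: acceptance requires $f(y)\ge f(x)$, a comparison against the parent that well-behavedness does not address (its thresholds are strictly positive for $k\ge 2$). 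The paper avoids all of this with a much shorter step: with at least constant probability $1-e^{-1/e}$ some crossover offspring is exactly $x$ with a single good bit flipped, and Lemma~\ref{lemma-one-bit-flip-is-ok} then gives $f(y)>f(x)$ with probability $1-n^{-3}$, so a one-bit improvement (or something at least as fit) is accepted. You would need either to adopt that route or to substantially strengthen your distance-stratified argument before the per-iteration comparison with \onemax is justified.
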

\begin{proof}
    With the exponentially small probability of failure, the algorithm starts in a search point which is at most $(1/2 + \varepsilon) \cdot n$ bits apart from the planted assignment for a constant $\varepsilon > 0$. \startnewtext Thus, the $n - d(x) = \Theta(n)$ condition is satisfied for the initial search point, which enables Lemma~\ref{lemma:asymptotic-difference} and its numerours corollaries. This means that the initial point is well-behaved by Theorem~\ref{theorem:everyone-behaves-well}.
    
    With a well-behaved parent, due to Lemma~\ref{lemma:selection-works}, the mutation phase selects an offspring which contains one of the missing bits with the probability, which is at most a constant factor worse than the probability of the same happening with \textsc{OneMax}. With probability of at least $1 - (1 - 1 / (e\ell))^{\ell} \ge 1 - (1 - 1 / (e\lambdabound))^{\lambdabound} \ge 1 - e^{-1/e}$, one of the missing bits from the parent, which is found in the chosen offspring, gets recombined with the parent and thus produces an assignment which is one bit closer to the optimum. Finally, by Lemma~\ref{lemma-one-bit-flip-is-ok}, this assignment has a better fitness than the parent, thus it is accepted with probability at least $1 - n^{-3}$. On the other hand, \revisepar{if no good bits were found}{this is slippy}, the same lemma ensures that an assignment which is more distant from the optimum is not accepted with the same probability. This means that the new parent still satisfies the $n - d(x) = \Theta(n)$ condition, thus it is well-behaved. Repeated application of this paragraph, until the optimum is found, proves the theorem. \finishnewtext
\end{proof}

As a consequence, for logarithmic clause density ($m / n = \Theta(\log n)$) the upper bound on $\lambda$, in order for the $\onell$~GA to work consistently, is $o((\log n)^{1/4})$.

\section{Running Time of the Adaptive $\onell$~GA with Constrained $\lambda$ on OneMax}

In this section, we prove the bounds on the running time on \textsc{OneMax} of the adaptive version of the $\onell$~GA, provided that the adaptation of the population size $\lambda$ uses the 1/5-th rule, but keeps $\lambda \le \lambdabound$ for some $\lambdabound$, which is possibly a function of the problem size~$n$.

In~\cite{learning-from-black-box-thcs}, it was shown that the optimal value of $\lambda$, given the current fitness value $f(x)$, is $\lambda^{*} = \sqrt{n / (n - f(x))}$. The 1/5-th rule, as seen in experiments~\cite{learning-from-black-box-thcs}, tends to keep $\lambda$ close to this optimum value, and this effect has been proven in~\cite{doerr-doerr-lambda-lambda-self-adjustment}.

When $\lambda$ is bounded, the overall algorithm run can be split into roughly two phases. The first phase is when the adaptation largely determines the value of $\lambda$, which follows from the deserved $\lambda^{*}$ being (much) less than $\lambdabound$. The second phase starts when $\lambda$ finally hits the $\lambdabound$, and while $\lambda^{*}$ grows, the actual $\lambda$ has to stay close to $\lambdabound$.

As a consequence, the runtime of the first phase is, roughly, the same as the runtime of the adaptive $\onell$~GA on the same range of fitness values, and the runtime of the second phase is, again roughly, the same as the runtime of the fixed-size $\onell$~GA on the same range of fitness values. We formalize these ideas in the proof of the following theorem.

\begin{theorem}
The expected running time of the adaptive $\onell$~GA on \textsc{OneMax}, where the population size parameter $\lambda$ is bounded from above by some $\lambdabound = \lambdabound(n)$, is $O(n \cdot \max(1, (\log n) / \lambdabound))$.
\end{theorem}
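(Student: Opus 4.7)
The plan is to decompose the run of the algorithm into two epochs separated by the fitness value $f_{c} := n - \lceil n/\lambdaboundsq \rceil$, that is, the threshold at which the unconstrained optimal choice $\lambda^{*}(x) = \sqrt{n/(n-f(x))}$ first exceeds $\lambdabound$. Call the iterations with current $f(x) \le f_{c}$ the \emph{low-fitness epoch} and those with $f(x) > f_{c}$ the \emph{high-fitness epoch}. The total expected number of fitness evaluations is then the sum of the expected costs of the two epochs, and I intend to show that the first contributes $O(n)$ and the second contributes $O(n \log n / \lambdabound)$, which together yield the claimed $O(n \cdot \max\{1, \log n / \lambdabound\})$.

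For the low-fitness epoch the idea is that the cap $\lambdabound$ is essentially inactive: whenever $f(x) \le f_{c}$ one has $\lambda^{*}(x) \le \lambdabound$, so the self-adjusting rule behaves exactly as in the unconstrained analysis of~\cite{doerr-doerr-lambda-lambda-self-adjustment}. Concretely, I would invoke the drift argument from that paper which shows that (i) the self-adjusted $\lambda$ stays, in expectation and with high probability, within a constant factor of $\lambda^{*}(x)$, and (ii) the expected number of fitness evaluations to improve from a search point at distance $d$ is $O(\sqrt{n/d})$. Summing $\sum_{d \ge \lceil n/\lambdaboundsq \rceil} O(\sqrt{n/d}) = O(n)$ gives the bound for this epoch. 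The only adaptation I need is that the potential argument of~\cite{doerr-doerr-lambda-lambda-self-adjustment} is stopped at the threshold $f_{c}$ instead of at the optimum, which does not change any of its constants.

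For the high-fitness epoch, the cap becomes binding and $\lambda$ stabilises near $\lambdabound$. Since $\lambda^{*}(x) > \lambdabound$ throughout this epoch, failures dominate successes, so the update rule hits its ceiling almost every iteration. Hence, with constant probability in every step, $[\lambda]$ equals $\lambdabound$, and it never exceeds $\lambdabound$ by construction. I would then apply the fitness-level argument used for the fixed-$\lambda$ version in~\cite{learning-from-black-box-thcs}: with $\lambda = \lambdabound \le \lambda^{*}(x)$, one iteration produces a strict improvement with probability $\Omega(d \lambdaboundsq / n)$, so the expected number of iterations per fitness level $d \in [1..\lceil n/\lambdaboundsq \rceil]$ is $O(n/(d \lambdaboundsq))$, each costing $2\lambdabound$ evaluations. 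Summing yields
\begin{equation*}
    \sum_{d=1}^{\lceil n/\lambdaboundsq\rceil} O\!\left(\frac{n}{d\, \lambdabound}\right)
    = O\!\left(\frac{n}{\lambdabound} \cdot \log\!\left(\frac{n}{\lambdaboundsq}\right)\right)
    = O\!\left(\frac{n \log n}{\lambdabound}\right).
\end{equation*}

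The main obstacle I expect is the second epoch, specifically making precise the claim that the self-adjusting $\lambda$ really stays close to $\lambdabound$ so that the fixed-$\lambda$ improvement probabilities apply. The cleanest way is probably a short drift argument on $\log(\lambdabound/\lambda)$: in any iteration the value either halves (up to constants) after a success or multiplies by a fixed factor after a failure, and in the high-fitness regime the success probability is at most a small constant, so $\log(\lambdabound/\lambda)$ has negative drift towards zero. This yields that the stationary value of $\lambda$ is $\Theta(\lambdabound)$ and that the expected cost of each iteration is $\Theta(\lambdabound)$, at which point the fitness-level sum above goes through without additional loss.
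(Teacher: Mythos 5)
Your proposal matches the paper's proof essentially step for step: the same two-stage split at $d \approx n/\lambdaboundsq$, with the first stage handled by the unconstrained self-adjusting analysis (observing that the cap can only strengthen the downward drift of $\lambda$) and the second by the fixed-$\lambda$ fitness-level argument summing to $O(n\log n/\lambdabound)$. The obstacle you flag at the end resolves more simply than a drift argument on $\log(\lambdabound/\lambda)$: within a single fitness level $\lambda$ never decreases (it only drops on a strict improvement, which ends the level), so the ramp-up to $\lambda=\lambdabound$ costs a geometric sum of $O(\lambdabound)$ evaluations per level --- $O(n/\lambdabound)$ in total --- and thereafter $\lambda$ sits exactly at $\lambdabound$, so the fixed-$\lambda$ bound applies verbatim.
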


\begin{proof}
We consider two \emph{stages} of the optimization process. The first stage starts when the optimization starts and ends when the distance to optimum $d(x)$, which is $n - f(x)$, first becomes at most $\lfloor n / (2 \lambdaboundsq) \rfloor$. The second stage starts at this point and ends when the optimization ends. We show that the runtime of the first stage is $O(n)$, and the runtime of the second stage is $O(n (\log n) / \lambdabound)$.

\paragraph{First stage}
Here, we basically repeat the proof of~\cite[Theorem 5]{doerr-doerr-lambda-lambda-self-adjustment-arxiv}, with the following corrections:
\begin{itemize}
    \item The original proof uses a threshold for the $\lambda$, which is $C_0 \cdot \lambda^{*}$ for the optimal $\lambda^{*} = \sqrt{n / (n - f(x))}$ and a certain large enough constant $C_0$. Above this threshold, the probability for the algorithm to find the next meaningful bit, and thus to improve the best seen fitness, is so high, that the random process of changing $\lambda$ drifts strong enough towards decreasing $\lambda$. Here, we need to change this threshold to $\min(\lambdabound, C_0 \cdot \lambda^{*})$, where $\lambdabound$ is the constraint on the population size. Note that the notation $\lambdabound$ means a different thing in~\cite{doerr-doerr-lambda-lambda-self-adjustment-arxiv} (in that paper, it is a synonym of $C_0 \cdot \lambda^{*}$).
    \item Claim 2.1~still reads and is proven in the same way, however, whenever $C_0 \cdot \lambda^{*} \cdot F^{t/4}$ exceeds our constraint $\lambdabound$, instead of the exact population size, we start talking about its upper bound.
    \item Claim 2.2~refers to Lemma~6, which may stop working if population size hits the $\lambdabound$. However, just at the same moment of time, it is no longer possible to increase the population size, so the drift of $\lambda$ towards greater values stops, but its negative counterpart remains available to the algorithm. This means that the expected progress of the random walk of $\log_{F} \lambda$ still becomes negative whenever $\lambda$ exceeds either $\lambdabound$ (in our notation) or $C_0 \cdot \lambda^{*}$, and is still separated from zero by a constant.
\end{itemize}

This means that the upper bound on the running time of the adaptive $\onell$~GA on the interval $f(x) \in [\lfloor n / (2 \lambdaboundsq) \rfloor..n]$, which is $O(n)$, can be translated without any major change to the adaptive version with the constrained population size.

\paragraph{Second stage}
Similarly to the proof of~\cite[Theorem 5]{doerr-doerr-lambda-lambda-self-adjustment-arxiv}, we split the process in \emph{phases}. Each phase starts just after a new fitness level is reached and ends with selecting an individual with the fitness value strictly greater than the one of the parent. As the adaptation will necessarily drop $\lambda$ by the adaptation strength factor of $F$ right before the phase starts, the initial value for the $\lambda$ is $\lambda_0 \le \lambdabound / F$.

We differentiate between \emph{short} and \emph{long} phases. A \emph{short} phase always satisfies $\lambda < \lambdabound$. In exactly the same way as in~\cite[Theorem 5, Claim 1]{doerr-doerr-lambda-lambda-self-adjustment-arxiv}, we are able to prove that the expected running time of a \emph{short} phase is $O(\lambdabound)$.

A \emph{long} phase necessary has at least one iteration, at the end of which $\lambda = \lambdabound$. What is more, the equality $\lambda = \lambdabound$ holds until the very end of this phase, as the only possibility for $\lambda$ to drop down is generation of an individual strictly better than its parent, which signifies the end of the phase. For convenience, we split every \emph{long} phase in an \emph{initial} phase, which proceeds until $\lambda = \lambdabound$, and a \emph{main} phase, which maintains $\lambda = \lambdabound$.

The running time of every \emph{initial} phase is $O(\lambdabound)$, as for \emph{short} phases. This means that the common runtime of all \emph{short} and \emph{initial} phases altogether, since $d(x) \le \lfloor n / (2 \lambdaboundsq) \rfloor$, can be estimated as $n / (2 \lambdaboundsq) \cdot O(\lambdabound) = O(n / (2 \lambdabound)) = O(n)$, because, for every fixed algorithm run, every \emph{long} or \emph{short} phase corresponds to exactly one unique $f(x)$.

All \emph{main} phases feature $\lambda = \lambdabound$, which means their common runtime is at most the same as the runtime of the $\onell$~GA with fixed $\lambda = \lambdabound$. This means it can be estimated in exactly the same way as in~\cite[Theorem 2, upper bound, third phase]{doerr-doerr-lambda-lambda-fixed-tight}, which gives a bound of $O(n (\log n) / \lambdabound)$. This bound determines the runtime of the second stage.
\end{proof}

This theorem, although being interesting on its own, can be also applied to solving random 3-SAT problems instead of \textsc{OneMax}, provided $\lambdabound$ is small enough to satisfy the conditions of Theorem~\ref{3cnf-bounding-theorem}, and keeping in mind that this bound on the \emph{expected} runtime holds, in the case of 3-SAT, with probability $1 - o(1)$.

It follows particularly that an upper bound of $\Theta(\log n)$ on the population size still brings the linear runtime of the adaptive $\onell$~GA on \textsc{OneMax}. Recall that, when the adaptation is not limited, the population size grows up to $\Theta(\sqrt{n})$ towards the last iterations of the algorithm. The new result shows that such large population sizes are not really necessary for the success on \textsc{OneMax}, although the corresponding insight could be seen already in~\cite{doerr-doerr-lambda-lambda-fixed-tight}.

\section{Experiments}

Although theoretical results give us certain insights about how powerful our adaptation schemes are, they, at least in their current form, may be not enough to persuade practitioners whether they are of any use in solving practical problems. To complement our theoretical research, we conducted a series of experiments, which appear to demonstrate that application of our ideas and proposals yields immediate and noticeable response on feasible problem sizes.

\subsection{Results for OneMax}

The first experiment was dedicated to evaluation of performance for fixed-size $\onell$~GA, as well as unlimited and constrained adaptation schemes, on the \textsc{OneMax} problem. Similar experiments had been also performed in~\cite{learning-from-black-box-thcs}. In this paper, we evaluated much larger problem sizes, which helped revealing subtle differences in behaviour of various algorithms. We also evaluated more fixed population sizes systematically, and we also included the version of the $\onell$~GA with constrained adaptation ($\lambda \le 2 \log (n + 1)$).

We considered problem sizes to be powers of two, from $2^4$ to $2^{24}$. The fixed population sizes were chosen to be $\lambda \in [2..20]$, to cover various ranges of behaviour. For the adaptive $\onell$~GA schemes, the unlimited one (denoted as $\lambda \le n$) and the logarithmically constrained one (denoted as $\lambda \le 2 \ln n$, in fact, the upper limit is $2 \log (n + 1)$) were included in the comparison. We also ran the $(1+1)$ EA. For every combination of the algorithm and the problem size, 100 runs were executed, and the median number of function evaluations is reported. As it is shown in~\cite{doerr-doerr-lambda-lambda-fixed-tight}, the runtimes of the fixed-size $\onell$~GA are well concentrated, and the same thing holds experimentally for the adaptive versions, the median plots are representative for demonstrating trends and differences for this problem.

The results are presented in Fig.~\ref{pic-onemax-plots}. The abscissa axis, logarithmically scaled, represents the problem size, while the ordinate axis, linearly scaled, represents the median number of function evaluations, divided by the problem size. In these axes, the $\Theta(n \log n)$ algorithms produce a straight line with the angle of $\alpha > 0$ to the abscissa axis, and the $\Theta(n)$ algorithms produce a horizontal line. Algorithms of intermediate complexities, such as $\Theta(n \sqrt{\log n})$, produce the plots, whose shapes depend on exponents at the logarithmic parts.

\begin{figure}[!t]
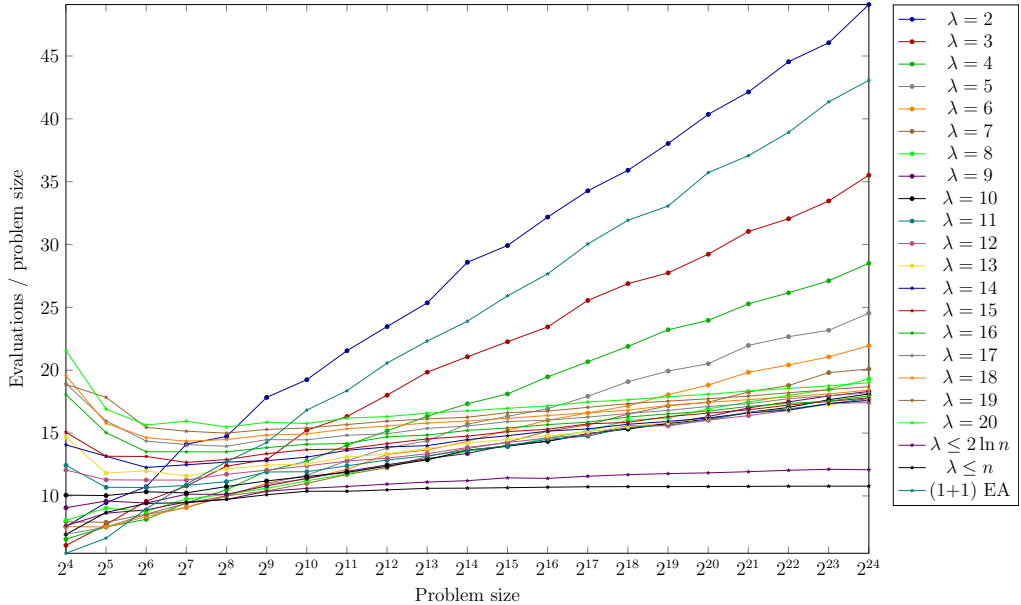

\scalebox{0.6}{\iqrPlotOneMax{1.4\textwidth}{\textwidth}{}}
\caption{Plots of median runtimes on OneMax}\label{pic-onemax-plots}
\end{figure}

Several phenomena, which has been previously shown only theoretically, can be seen in Fig.~\ref{pic-onemax-plots} as experimental plots for the first time. For instance, it can be seen that the fixed-size versions of the $\onell$~GA demonstrate a slightly superlinear behaviour at small sizes, and then switches at certain threshold to a strictly $\Theta(n \log n)$ behaviour. This behaviour follows from the $\Theta(\min(n \log n / \lambda, n \lambda \log \log \lambda / \log \lambda))$ bound proven in~\cite{doerr-doerr-lambda-lambda-fixed-tight}. In fact, we can even observe something similar to the $1 / \lambda$ quotient: all fixed-size plots have different inclines, seem to originate at the same point (which is slightly above the lower left corner of the plot), and the ratio at $n = 2^{24}$ is approximately 27 for $\lambda = 4$ and approximately $50$ for $\lambda = 2$, which is quite close to the 2x difference.

The lower envelope of the fixed-size plots should exactly correspond to the optimal fixed population size as a function of the problem size. We can clearly see that it behaves like a convex upwards function. Although it is impossible, at this scale, to determine the precise shape of the function (i.e. to spot the difference between the $\Theta(\sqrt{x})$ and $\Theta(\sqrt{x \log \log x / \log x})$, which corresponds to the difference between the original estimation in~\cite{learning-from-black-box-thcs} and the refined estimation in~\cite{doerr-doerr-lambda-lambda-fixed-tight}), it is clearly seen that this envelope is significantly sublinear, that is, the difference between $\Theta(n \log n)$ and the actual runtime of the optimal tuning (which is $\Theta(n \sqrt{\log n \log \log \log n / \log \log n})$) is seen from experiments.

Finally, both adaptive versions demonstrate easy-to-see linear runtime, with a quotient of approximately 11 for the unlimited adaptation, and of approximately 12.5 for the logarithmically constrained adaptation. Thus, our result shows that bounding the population size by a logarithmic bound not only preserves the asymptotics of the runtime, but also changes the absolute performance only very little. The difference between the adaptive and the optimally fixed-size versions can be, again, clearly seen from the experimental data.

\subsection{Results for Random 3-CNF Formulas}

The second experiment was dedicated to solving random 3-CNF formulas with planted solutions. The same set of algorithms was considered. We took the problem sizes again in the form of $2^t$, where $t \in [7..20]$. The rise of the lower bound is due to the fact that, at sizes $2^6$ and below, difficult formulas became too often to appear, and studying the behaviour of the $\onell$~GA on such formulas is not a scope of this research. The upper bound is chosen for performance reasons. We have evaluated the performance of the unlimited adaptive $\onell$~GA only for sizes up to $2^{16}$, because this algorithm was computationally the most expensive for evaluation, for the reasons which are discussed below. The number of clauses $m$ was a function of the problem size: $m(n) = \lfloor 4 n \log n \rfloor$.

The results are presented in Fig.~\ref{pic-3cnf-plots}. The trends seen here are quite similar to the ones for \textsc{OneMax} (see Fig.~\ref{pic-onemax-plots}), with two notable exceptions.

\begin{figure}[!t]
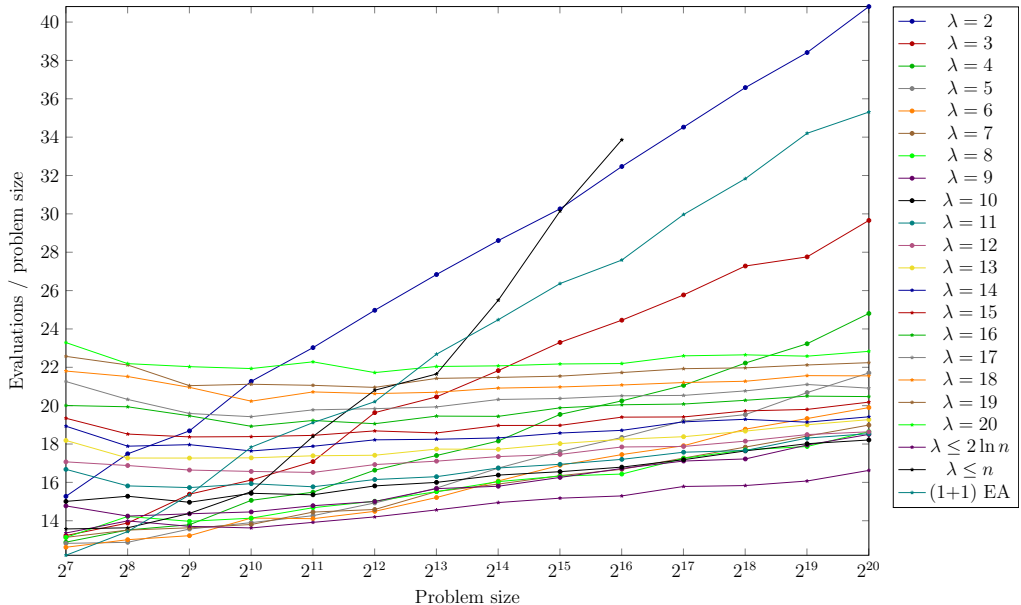

\scalebox{0.6}{\iqrPlotRandomCNF{1.4\textwidth}{\textwidth}{}}
\caption{Plots of median runtimes on random 3-CNF formulas with planted solutions}\label{pic-3cnf-plots}
\end{figure}

First, we see that the constrained adaptive $\onell$~GA~-- even with the logarithmic adaptation constraint, which has not (yet) been proven to be helpful~-- manages to perform better than all fixed-size variations. However, its runtime is not linear here, which may indicate that the logarithmic constraint is too big to be convenient for this problem.

Second, and the most noticeable, the $\onell$~GA with unlimited adaptation performs even worse than the plain $(1+1)$~EA. The reasons for this behavior is that the values of population size $\lambda = \Theta(\sqrt{n})$, common at the latest stages of optimization, which also dictate the mutation probability, interfere in bad ways with the problem. In other words, conditions of Theorem~\ref{3cnf-bounding-theorem} are severely violated, which destroys the adaptation logic of the $\onell$~GA.

We performed an additional series of runs, where for each iteration of each run we recorded the distance to the optimum $d(x)$ of the currently best individual $x$, as well as the value of $\lambda$. The sizes to be considered were $n = 2^t$ for $t \in [13..16]$, and the number of clauses $m$ was, again, $m(n) = \lfloor 4 n \log n \rfloor$. For every problem size, we made five runs. The plots for dependencies of $\lambda$ on $d = d(x)$ are presented on Fig.~\ref{escape-13}--\ref{escape-16}. For convenience, the values of $\sqrt{n / d}$ were taken instead of $d(x)$ to use with the abscissa axis, and both axes are logarithmic. In these coordinates, the optimal $\lambda = \lambda(d) = \sqrt{n/d}$ values form a straight line, which is drawn in black color on every plot.

\begin{figure}
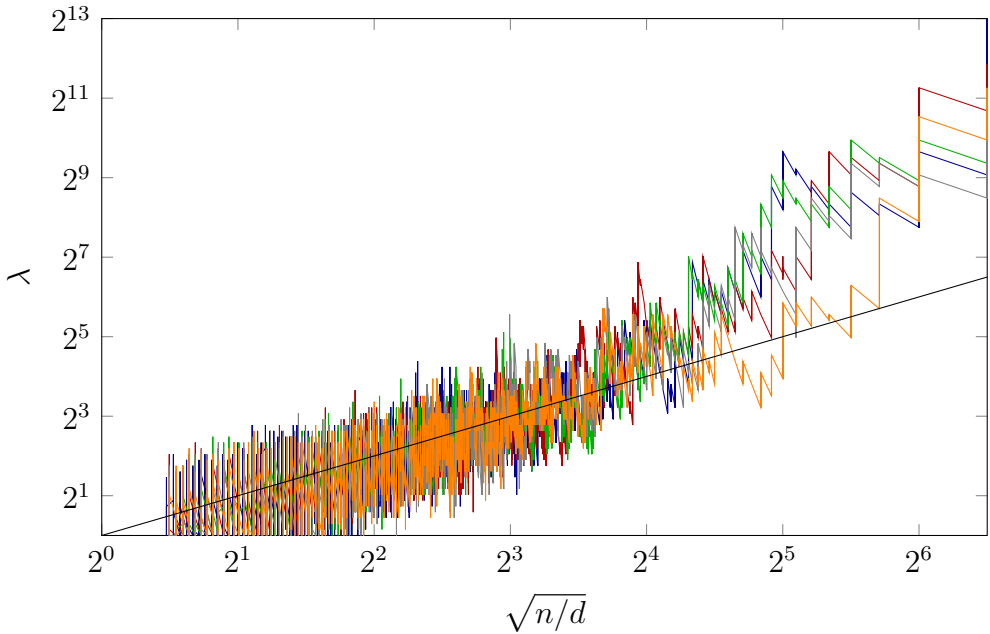

\escapeThirteen{0.85\textwidth}{0.5\textwidth}
\caption{Example runs with unconstrained $\lambda$, $n = 2^{13}$}\label{escape-13}
\end{figure}

\begin{figure}
\escapeFourteen{0.85\textwidth}{0.5\textwidth}
\caption{Example runs with unconstrained $\lambda$, $n = 2^{14}$}\label{escape-14}
\end{figure}

\begin{figure}
\escapeFifteen{0.85\textwidth}{0.5\textwidth}
\caption{Example runs with unconstrained $\lambda$, $n = 2^{15}$}\label{escape-15}
\end{figure}

\begin{figure}
\escapeSixteen{0.85\textwidth}{0.5\textwidth}
\caption{Example runs with unconstrained $\lambda$, $n = 2^{16}$}\label{escape-16}
\end{figure}

Fig.~\ref{escape-13}--\ref{escape-16} demonstrate that for distances which satisfy $\sqrt{n/d} \ge \log n$, the plots stay in a certain stripe around the optimal values for $\lambda$. However, for smaller distances, adaptation starts to diverge, and values of $\lambda$ tend to be greater than necessary. The last iterations sometimes feature very large population sizes. In Table~\ref{max-lambdas}, medians for the maximum $\lambda$ values, as well as interquartile ranges, which were seen in the main experiment, are presented. These values suggest that maximum $\lambda$ tends to be $\Theta(n)$.

\begin{table}[!t]
\caption{Medians of maximum $\lambda$ values for unconstrained adaptation}\label{max-lambdas}
\centering
\begin{tabular}{rrrr}
$\log_2 n$ & \multicolumn{1}{c}{$n$} & \multicolumn{1}{c}{median} & \multicolumn{1}{c}{IQR} \\\hline
$13$ & $8192$  & $2109.99$ & $2589.63$ \\
$14$ & $16384$ & $4747.47$ & $4236.71$ \\
$15$ & $32768$ & $10681.82$ & $11148.18$ \\
$16$ & $65536$ & $27945.01$ & $19246.74$ \\
\end{tabular}
\end{table}

\section{Conclusion}\label{sec-conclusion}

The runtime analysis of the $\onell$-GA conducted in this work shows that the GA can cope with a weaker fitness-distance correlation than the perfect one of the \onemax test function, the only example mathematically analyzed before. However, a weaker fitness-distance correlation requires that the population size $\lambda$ is not taken too large, as otherwise the strong mutation rate of $\lambda/n$ creates offspring that are too far from each other for the GA to find the best one (in terms of distance to the optimum) in the intermediate selection step.

Our recommendation on how to use the $\onell$-GA therefore is to first try a moderate size constant $\lambda$, say $\lambda =5$ or $\lambda = 10$. If this leads to an improved performance, than larger values of $\lambda$ can be tried. For the self-adjusting version of the GA, we generally recommend using an upper limit for the value which $\lambda$ can take. For first experiments, this value should be taken around the best static value for $\lambda$ and then slowly increased.

We remark that the main part of the body of the GA, namely the generation of $y$ from $x$, can be used as a mutation operator also in other algorithms. We have no experience with this approach so far, but are optimistic that it can give good results as well.

\subsection*{Acknowledgments}

This research was started when Maxim Buzdalov stayed at \'Ecole Polytechnique in May 2016 supported by a grant from the French Consulate in Russia (bourse Metchnikov). Maxim Buzdalov was also financially supported by the Government of Russian Federation, Grant 074-U01. This research also benefited from the support of the ``FMJH Program Gaspard Monge in optimization and operations research'', and from the support to this program from EDF.

\bibliographystyle{abbrv}
\bibliography{../../../../bibliography}

\begin{thebibliography}{10}

\bibitem{augerdoerr}
A.~Auger and B.~Doerr.
\newblock {\em Theory of Randomized Search Heuristics: Foundations and Recent
  Developments}.
\newblock World Scientific Publishing Co., Inc., River Edge, NJ, USA, 2011.

\bibitem{BadkobehLS15}
G.~Badkobeh, P.~K. Lehre, and D.~Sudholt.
\newblock Black-box complexity of parallel search with distributed populations.
\newblock In {\em Proceedings of Foundations of Genetic Algorithms}, pages
  3--15. ACM, 2015.

\bibitem{Bernstein1924}
S.~Bernstein.
\newblock On a modification of {C}hebyshev's inequality and of the error
  formula of {L}aplace.
\newblock {\em Ann. Sci. Inst. Sav. Ukraine, Sect. Math. 1}, 4(5):38--49, 1924.

\bibitem{Doerr16}
B.~Doerr.
\newblock Optimal parameter settings for the $(1+(\lambda, \lambda))$ genetic
  algorithm.
\newblock In {\em Proceedings of Genetic and Evolutionary Computation
  Conference}, pages 1107--1114. ACM, 2016.
\newblock Full version available at \url{http://arxiv.org/abs/1604.01088}.

\bibitem{doerr-doerr-lambda-lambda-self-adjustment}
B.~Doerr and C.~Doerr.
\newblock Optimal parameter choices through self-adjustment: Applying the
  1/5-th rule in discrete settings.
\newblock In {\em Proceedings of Genetic and Evolutionary Computation
  Conference}, pages 1335--1342, 2015.

\bibitem{doerr-doerr-lambda-lambda-self-adjustment-arxiv}
B.~Doerr and C.~Doerr.
\newblock Optimal parameter choices through self-adjustment: Applying the
  1/5-th rule in discrete settings, 2015.

\bibitem{doerr-doerr-lambda-lambda-fixed-tight}
B.~Doerr and C.~Doerr.
\newblock A tight runtime analysis of the $(1+(\lambda, \lambda))$ genetic
  algorithm on onemax.
\newblock In {\em Proceedings of Genetic and Evolutionary Computation
  Conference}, pages 1423--1430, 2015.

\bibitem{learning-from-black-box}
B.~Doerr, C.~Doerr, and F.~Ebel.
\newblock Lessons from the black-box: fast crossover-based genetic algorithms.
\newblock In {\em Proceedings of Genetic and Evolutionary Computation
  Conference}, pages 781--788, 2013.

\bibitem{learning-from-black-box-thcs}
B.~Doerr, C.~Doerr, and F.~Ebel.
\newblock From black-box complexity to designing new genetic algorithms.
\newblock {\em Theoretical Computer Science}, 567:87--104, 2015.

\bibitem{doerr-neumann-sutton-oneplusone-cnf}
B.~Doerr, F.~Neumann, and A.~M. Sutton.
\newblock Improved runtime bounds for the (1+1) {EA} on random 3-{CNF} formulas
  based on fitness-distance correlation.
\newblock In {\em Proceedings of Genetic and Evolutionary Computation
  Conference}, pages 1415--1422, 2015.

\bibitem{goldman-punch-ppp}
B.~W. Goldman and W.~F. Punch.
\newblock Parameter-less population pyramid.
\newblock In {\em Proceedings of Genetic and Evolutionary Computation
  Conference}, pages 785--792, 2014.

\bibitem{LehreW12}
P.~K. Lehre and C.~Witt.
\newblock Black-box search by unbiased variation.
\newblock {\em Algorithmica}, 64(4):623--642, 2012.

\bibitem{mcdiarmid-inequality}
C.~McDiarmid.
\newblock On the method of bounded differences.
\newblock {\em Surveys in Combinatorics}, 141:148--188, 1989.

\bibitem{sudholt-crossover-speeds-up}
D.~Sudholt.
\newblock Crossover speeds up building-block assembly.
\newblock In {\em Proceedings of Genetic and Evolutionary Computation
  Conference}, pages 689--696, 2012.

\bibitem{SuttonN14}
A.~M. Sutton and F.~Neumann.
\newblock Runtime analysis of evolutionary algorithms on randomly constructed
  high-density satisfiable 3-{CNF} formulas.
\newblock In {\em Proceedings of Parallel Problem Solving from Nature}, volume
  8672 of {\em Lecture Notes in Computer Science}, pages 942--951. Springer,
  2014.

\bibitem{Wegener02EvOpt}
I.~Wegener.
\newblock Methods for the analysis of evolutionary algorithms on
  pseudo-{B}oolean functions.
\newblock In S.~Sarker, X.~Yao, and M.~Mohammadian, editors, {\em Evolutionary
  Optimization}, pages 349--369. Kluwer, Dordrecht, 2002.

\end{thebibliography}

\end{document}